\newcolumntype{L}[1]{>{\raggedright\let\newline\\\arraybackslash\hspace{0pt}}m{#1}}
\newcolumntype{C}[1]{>{\centering\let\newline\\\arraybackslash\hspace{0pt}}m{#1}}
\newcolumntype{R}[1]{>{\raggedleft\let\newline\\\arraybackslash\hspace{0pt}}m{#1}}
\newcommand{\myparagraph}[1]{\paragraph{#1}}
\newcommand{\real}{{\mathbb R}}
\newcommand{\note}[1]{{\it \color{red} [#1]}}
\newcommand{\bx}{\bm{x}}
\newcommand{\by}{\bm{y}}
\newcommand{\be}{\ensuremath{{\bm e}}}
\newcommand{\bW}{\ensuremath{{\bm W}}}
\newcommand{\bS}{\ensuremath{{\bm S}}}
\newcommand{\cR}{\ensuremath{\mathcal R}}
\newcommand{\set}[1]{\ensuremath{\mathcal #1}}
 \def\Real{{\mathbb{R}}}
 \def\cC{{\mathcal{C}}}
 \def\bI{{\bm{I}}}
 \def\bi{{\bm{i}}}
 \def\btheta{{\boldsymbol{\theta}}}
 \def\bomega{{\boldsymbol{\omega}}}
 \def\11{{\textbf{1}}}
 \def\cP{{\mathcal{P}}}
 \def\cN{{\mathcal{N}}}
\def\bb{{\bm{b}}}
\def\bX{{\bm{X}}}
\def\bY{{\bm{Y}}}
\newcommand{\beq}{\begin{eqnarray*}}
\newcommand{\eeq}{\end{eqnarray*}}
\newcommand{\beqn}{\begin{eqnarray}}
\newcommand{\eeqn}{\end{eqnarray}}
\newcommand{\bemn}{\begin{multiline}}
\newcommand{\eemn}{\end{multiline}}
\newtheorem{theorem}{Theorem}
\newtheorem{prop}{Proposition}
\newtheorem{lemma}{Lemma}
\def\cM{{\cal M}}
\def\cR{{\cal R}}
\def\cA{{\cal A}}
\def\cX{{\cal X}}
\def\cN{{\cal N}}
\def\cI{{\cal I}}
\def\cC{{\cal C}}
\def\cR{{\cal R}}
\def\cP{{\cal P}}
\def\cS{{\cal S}}
\def\bW{{\bm W}}
\def\ovt{\overline{\theta}}
\def\tcS{\mathcal{S}}
\def\tcM{\mathcal{M}}
\newcommand{\tbx}{\bm{x}}
\newcommand{\tby}{\bm{y}}
\newcommand{\tbX}{\bm{X}}
\newcommand{\tx}{x}
\def\tr{r}
\def\ts{s}
\def\ta{a}
\def\tpi{\pi}
\def\tcA{\mathcal{A}}
\def\tcP{\mathcal{P}}
\def\tQ{Q}
\def\tV{V}
\def\tcR{\mathcal{R}}
\def\tgam{\gamma}
\title{Solving Continual Combinatorial Selection via Deep Reinforcement Learning}
\author{
Hyungseok Song$^1$\footnote{Contact Author: 7590sok@gmail.com}\and
Hyeryung Jang $^2$\and
Hai H. Tran$^1$\and
Se-eun Yoon$^1$\and \\
Kyunghwan Son$^1$\and 
Donggyu Yun$^3$\and 
Hyoju Chung$^3$\and
Yung Yi$^1$\\
\affiliations
$^1$School of Electrical Engineering, KAIST, Daejeon, South Korea\\
$^2$Informatics, King’s College London, London, United Kingdom\\
$^3$Naver Corporation, Seongnam, South Korea\\
% \emails
% \{first, second\}@example.com,
% third@other.example.com,
% fourth@example.com
% 
}
\begin{document}

\maketitle

\renewcommand{\baselinestretch}{0.97}
\begin{abstract}
    \iffalse Old version 02/17
    \fi

        We consider the Markov Decision Process (MDP) of selecting a subset of items at each step, termed the Select-MDP (S-MDP). The large state and action spaces of S-MDPs make them intractable to solve with typical reinforcement learning (RL) algorithms 
        especially when 
        % it comes to real-world problems where
         the number of items is huge. In this paper, we present a deep RL algorithm to solve this issue by adopting the following key ideas. First, we convert the original S-MDP into an \textit{Iterative Select-MDP (IS-MDP)}, which is equivalent to the S-MDP in terms of optimal actions. IS-MDP decomposes a joint action of selecting $K$ items simultaneously into $K$ iterative selections resulting in the decrease of actions at the expense of an exponential increase of states. 
         Second, we overcome this state space explosion by exploiting a special symmetry in IS-MDPs with novel weight shared Q-networks, which provably maintain sufficient expressive power. 
        %  Equi-invariance paves way for the design of our significantly simplified $K$-cascaded deep Q-networks based on a method named \textit{progressive-parameter sharing}. 
        % We mathematically prove that our Q-networks with parameter sharing does not lose expressive significantly.
        % performance guarantee of the our method. 
        Various experiments demonstrate that our approach works well even when the item space is large and that it scales to environments with item spaces different from those used in training.
\end{abstract}

% \note{HS:I think we should omit this. Usually, these large-scale SELECT-MDP is considered difficult to be solved via deep reinforcement learning because the agent needs to explore more state and action spaces as  the number of the elements increases.
% Furthermore, the deep neural networks are usually not suitable to handle the different scale of the elements during the training or test.}

\section{Introduction}
\label{sec:intro}
Imagine yourself managing a football team in a league of many matches.
Your goal is to maximize the total number of winning matches during the league.
For each match, you decide a lineup (\textit{action}$:\tilde{a}$) by selecting $K$ players among $N$ candidates to participate in it and allocating one of $C$ positions (\textit{command}$:c$) to each of them, with possible duplication.
You can observe a collection (\textit{state}$:\tilde{s}$) of the current status (\textit{information}$:i_{n}$) of each candidate player (\textit{item:} $n$).
During the match, you cannot supervise anymore until you receive the result (\textit{reward}$:\tilde{r}$), as well as the changed collection of the status (\textit{next state}$:\tilde{s}'$) of $N$ players which are stochastically determined by a transition probability function $\tilde{\cP}$.
In order to win the long league, you should pick a proper combination of the selected players and their positions to achieve not only a myopic result of the following match but also to consider a long-term plan such as the rotation of the members.
We model an MDP for these kinds of problems, termed  \textit{Select-MDP} (S-MDP), where an agent needs to make combinatorial selections sequentially.

There are many applications that can be formulated as an S-MDP including recommendation systems \cite{ricci2015recommender,zheng2018drn}, contextual combinatorial semi-bandits \cite{qin2014contextual,li2016contextual}, mobile network scheduling \cite{kushner2004convergence}, and fully-cooperative multi-agent systems controlled by a centralized agent \cite{usunier2016episodic} (when $N=K$). However, learning a good policy is challenging because the state and action spaces increase exponentially in $K$ and $N$. % There are $\binom{N}{K}C^{K}$ possible actions in S-MDP which is computation intractable to explore the whole action space. 
For example, our experiment shows that the vanilla DQN \cite{mnih2015human} proposed to tackle the large state space issue fails to learn the Q-function in our test environment of $N=50$, even for the simplest case of $C=1,$ $K=1$. This motivates the research on a scalable RL algorithm for tasks modeled by an S-MDP.

In this paper, we present a novel DQN-based RL algorithm for S-MDPs
by adopting a synergic combination of the following two design ideas:

\smallskip
\begin{compactenum}[D1.]
\item For a given S-MDP, we convert it into a divided but equivalent one, called \textit{Iterative Select-MDP} (IS-MDP), where the agent iteratively selects an (item, command) pair one by one during $K$ steps rather than selecting all at once.
IS-MDP significantly relieves the complexity of the joint action space per state in S-MDP; the agent only needs to evaluate $KNC$ actions during $K$ consecutive steps in IS-MDP, while it considers $\binom{N}{K} C^{K}$ actions for each step in S-MDP. We design $K$-cascaded deep Q-networks for IS-MDP, where each Q-network selects an item with an assigned command respectively while considering the selections by previous cascaded networks.
% the previous selection by the other cascaded Q-networks.

% \vspace{0.05cm}
\item
Although we significantly reduce per-state action space in IS-MDP, 
the state space is still large as $N$ or $K$ grows.  
To have scalable and fast training, we consider two levels of weight parameter sharing for Q-networks: intra-sharing (I-Sharing) and unified-sharing (U-Sharing). 
% I-sharing is a weight sharing method for each cascaded Q-network to handle the complexity by $N$. This is done by exploiting a special symmetry in IS-MDP called \textit{equi-invariance} where the order of items does not matter.
% We further simplify those cascaded Q-networks by sharing weight parameters among them, which reduces the training complexity for large $K$.
In pactice, we propose to use a mixture of I- and U-sharing, which we call progressive sharing (P-sharing), by starting from a single parameter set as in U-sharing and then progressively increasing the number of parameter sets, approaching to that of I-sharing.

% The large $K$ value

% The issue of large $K$ is c

% We introduce a weight sharing method called  \textit{Intra-sharing} for the Q-networks to exploit equi-invariance.
% We tackle this problem by proposing additional parameter sharing among the cascaded Q-networks called  
% % two levels of parameter sharing named \textit{intra-parameter sharing} and \textit{unified-parameter sharing}.
% \textit{progressive-parameter sharing} method,
% which achieves faster training.
% thereby outperforming other baselines in all experiments.
% \note{We first apply \textit{intra-parameter sharing} (I-sharing) inside each Q-network, which achieves a significant reduction of parameters, thereby increases sampling efficiency.
% We also apply inter-parameter sharing across the distinct cascaded Q-networks called \textit{progressive-parameter sharing} (P-sharing), which facilitates fast training among the networks.}
\end{compactenum}
\smallskip

The superiority of our ideas is discussed and evaluated in two ways. First, 
despite the drastic parameter reduction, we theorectically  
claim that I-sharing does not hurt the expressive power for IS-MDP by proving (i) \textit{relative local optimality} and (ii) \textit{universality} of I-sharing. Note that this analytical result is not limited to a Q-function approximator in RL, but is also applied to any neural network with parameter sharing in other contexts such as supervised learning. 
Second, we evaluate our approach on two self-designed S-MDP environments (circle selection and selective predator-prey) and observe a significantly high performance improvement,
especially with large $N$ (e.g., $N=200$), over other baselines. Moreover, the trained parameters can generalize to other environments
of much larger item sizes without additional training, where we use the trained parameters in $N=50$ for those in $N=200.$

% the I-shared Q-network does not lose its expressive power compared to a non-sharing one. We show this mathematically by proving \textit{`Relative Local Optimality'} and \textit{`Universality'}.
% Note that, both analytical results are not limited to a Q-function approximator in the context of reinforcement learning but applied to any neural network with parameter sharing in other contexts, e.g., supervised learning.

% \note{
% We design two environments for S-MDPs and show that our algorithms work well in both tasks with a large $N$ or $K$. Furthermore, after being trained for a specific $N$, the trained parameters still shows almost optimal performance for another environment with different $N'$ without any additional training.
% }

% \vspace{-0.1cm}
\subsection{Related Work}
\paragraph{Combinatorial Optimization via RL}
Recent works on deep RL have been solving NP-hard combinatorial optimization problems on graphs \cite{dai2017learning}, Traveling Salesman problems \cite{kool2018attention}, and recommendation systems \cite{chen2018neural,deudon2018learning}.
In many works for combinatorial optimization problems, they do not consider the future state after selecting a combination of $K$ items and some other commands.
\cite{chen2018neural} suggests similar cascaded Q-networks without efficient weight sharing which is crucial in handling large dimensional items. 
\cite{usunier2016episodic} suggests a centralized MARL algorithm where the agent randomly selects an item first and then considers the command. 
Independent Deep Q-network (IDQN) \cite{tampuu2017multiagent} is an MARL algorithm where each item independently chooses its command using its Q-network.
To summarize, our contribution is to extend and integrate those combinatorial optimization problems successfully and to provide a scalable RL algorithm using weight shared Q-networks. 

\myparagraph{Parameter Sharing on Neural Networks and Analysis}
Parameter shared neural networks have been studied on various structured data domains such as graphs \cite{kipf2016semi} and sets \cite{qi2017pointnet}.
These networks do not only save significant memory and computational cost but also perform usually better than non-parameter shared networks.
For the case of set-structured data,
%  which consists of unordered vectors of items in a set, 
there are two major categories: equivariant \cite{ravanbakhsh2016deep,hartod2018sets} and invariant networks \cite{qi2017pointnet,zaheer2017deep,maron2019universality}. 
% which are determined according to whether the output of the network is equivariant  or invariant  to the order of the items.
In this paper, we develop a parameter shared network (I-sharing) which contains both permutation equivariant and invariant properties.
Empirical successes of parameter sharing have led many works to delve into its mathematical properties.
\cite{qi2017pointnet,zaheer2017deep,maron2019universality} show the universality of invariant networks for various symmetries.
As for equivariant networks, a relatively small number of works analyze their performance.
\cite{ravanbakhsh2017equivariance,zaheer2017deep,hartod2018sets} find necessary and sufficient conditions of equivariant linear layers.
\cite{yarotsky2018universal} designs a universal equivariant network based on polynomial layers.
However, their polynomial layers are different from widely used linear layers.
In our paper, we prove two theorems which mathematically guarantee the performance of our permutation equi-invariant networks in different ways. 
Both theorems can be applied to other similar related works.

\section{Preliminary}
 \label{sec:model}

\subsection{Iterative Select-MDP (IS-MDP)}
\label{sec:is-mdp}

\begin{figure}
  \centering
  \begin{minipage}{.48\textwidth}
    \captionsetup[subfloat]{}
    \subfloat[Select-MDP (S-MDP)]{\includegraphics[width=0.99\linewidth]{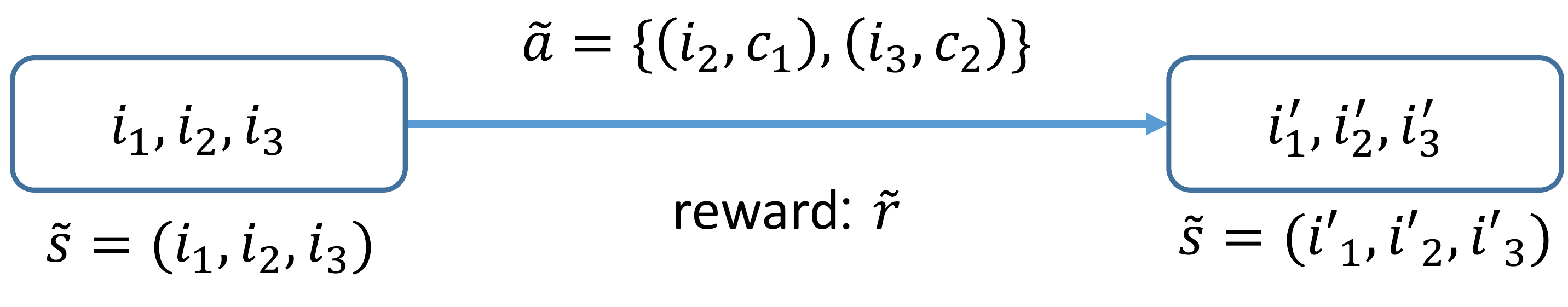}}
  \end{minipage}
  \begin{minipage}{.48\textwidth}
    % \captionsetup[subfloat]{captionskip=-0.03cm}
    \subfloat[Iterative-Select MDP (IS-MDP)]{\includegraphics[width=0.99\linewidth]{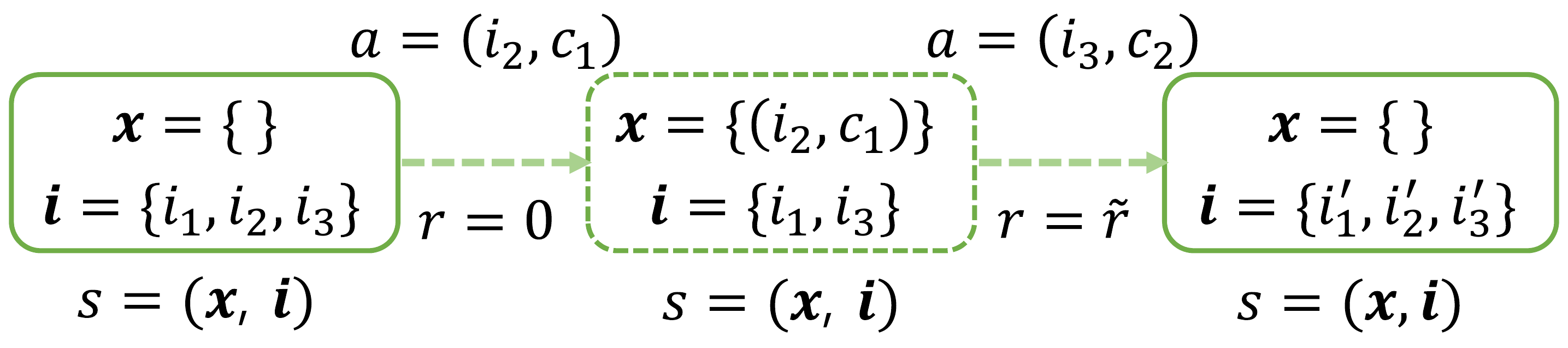}}
  \end{minipage}
  % \vspace{-0.3cm}
  \caption[cross]{
    Example of an S-MDP and its equivalent IS-MDP for $N=3$ and $K=2$.
    %  An action $\tilde{a} = \{(i_2,c_1),(i_3,c_2)\}$ in S-MDP is divided into $2$ phases of actions $a=(i_2,c_1)$ and $a=(i_3,c_2)$ in IS-MDP.
    %: in the first phase, the action $a_1 = (2,c_1)$ is taken; the deterministic transition occurs with zero reward; and in the second phase the action $a_2 = (3,c_2)$ is taken.
  }
  % \vspace{-0.3cm}
  \label{fig:MDPs}
\end{figure}

Given an S-MDP, we formally describe an IS-MDP as a tuple $\cM=\langle \tcS, \tcA, \tcP, \tcR, \tgam \rangle$ that makes a selection of $K$ items and corresponding commands in an S-MDP through $K$ consecutive selections. 
Fig.~\ref{fig:MDPs} shows an example of the conversion from an S-MDP to its equivalent IS-MDP. 
In IS-MDP, given a tuple of the $N$-item information $(i_1,\ldots,i_N)$, with $i_n \in \cI$ being the information of the item $n$, the agent selects one item $i_n$ and assigns a command $c \in \cC$ at every `phase' $k$ for $0 \leq k < K$. After $K$ phases, it forms a joint selection of $K$ items and commands, and a probabilistic transition of the $N$-item information and the associated reward are given.

To elaborate, at each phase $k$, the agent observes a state $s = ((x_1,\ldots,x_{k}),(i_{1},\ldots,i_{N-k})) \in \cS_k$ which consists of a set of $k$ pairs of information and command which are selected
 in prior phases, denoted as $\bx = (x_1,\ldots,x_{k})$, with $x_k \in \cI \times \cC$  being a pair selected in the $k$th phase, and a tuple of information of the unselected items up to  phase $k$, denoted as $\bi = (i_{1},\ldots,i_{N-k})$. 
 % As a result, the state space of an IS-MDP is given by $\cS = \bigcup_{0 \leq k < K} \cS_k$.
From the observation $s \in \cS_k$ at phase $k$, the agent selects an item $n$ among the $N-k$ unselected items and assigns a command $c$, i.e.,
 a feasible action space for state $s$ is given by $\cA(s):=\{ (n,c) \, \vert \, n \in \{1,\ldots,N-k\}, \, c \in \cC\}$, where $(n,c)$ represents a selection $(i_{n},c)$. As a result, the state and action spaces of an IS-MDP are given by $\cS = \bigcup_{0 \leq k < K} \cS_k$ and $\cA = \bigcup_{s \in \cS}\cA(s)$, respectively. 
% Note that $\vert \cA(s)\vert = (N-k)\cdot C $. 
We note that any state $\tilde{s} = (i_1,\ldots,i_N)$ in an S-MDP belongs to $\cS_0$, i.e., the $0$th phase. In an IS-MDP, action $a=(n,c) \in \cA(s)$ for state $s = (\bx,\bi) \in \cS_k$ results in the next state $s' = (\bx+(i_{n},c), \bi - i_{n}) \in \cS_{k+1}$\footnote{We use $+,-$ as  $\bx + x:=(x_{1}, \cdots, x_{k}, x)$ and $\bi - i_{n}:= (i_{k}, \cdots, i_{n-1}, i_{n+1}, \cdots, i_{N})$.} and a reward $r$, 
\begin{align} \label{eq:EQUIGreedyTran}
    \begin{split}
        & k < K-1, \quad \cP(s',0 \mid s,a) \equiv 1, \\
        & k = K-1, \quad \cP(s',\tilde{r} \mid s,a) \equiv \tilde{\cP}(\tilde{s}',\tilde{r} \mid \tilde{s},\tilde{a}).
    \end{split}
\end{align}
Recall $\tilde{\cP}$ is the transition probability of S-MDP.
%  which does not affected by the order of the items as
% $$\tilde{\cP}(\tilde{s}',\tilde{r} \mid \tilde{s},\tilde{a}) \equiv \tilde{\cP}(\sigma(\tilde{s}'),\tilde{r} \mid \sigma(\tilde{s}),\sigma(\tilde{a})) $$  
% for any states $\tilde{s}, \tilde{s}'$ in S-MDP and an arbitrarily permutation  $\sigma$.   
The decomposition of joint action in S-MDPs (i.e., selecting $K$ items at once) into $K$ consecutive selections in IS-MDPs has equivalence in terms of the optimal policy \cite{maes2009structured}. 
% , i.e., the optimal policy of an IS-MDP is the optimal policy of the equivalent S-MDP and vice versa. 
Important advantage from the decomposition is that IS-MDPs have action space $\cA$ of size $NC$ while the action space of S-MDPs is $\binom{N}{K} C^K$. 

\subsection{Deep Q-network (DQN)} \label{sec:dqn}

We provide a background of the DQN \cite{mnih2015human}, one of the standard deep RL algorithms, whose key ideas such as the target network and replay buffer will also be used in our proposed method. 
The goal of RL is to learn an optimal policy %$\pi(a|s): \cS \mapsto \cA$
$\pi^{\star}(a|s): \cS \times \cA \mapsto [0,1]$ that maximizes the expected discounted return. 
We denote the optimal action-value functions (Q-function) under the optimal policy $\pi^{\star}$ by $Q^{\star}(s,a)$.
% The equivalence of the optimal policies of the original S-MDP and the associated IS-MDP has been studied in \citet{maes2009structured}.
The deep Q-network (DQN) parameterizes and approximates the optimal Q-function $Q^{\star}(s,a)$ using the so-called Q-network $Q(s,a; \omega)$, i.e., a deep neural network with a weight parameter vector $\omega$. In DQN, the parameter $\omega$ is learned by sampling minibatches of experience $(s,a,r,s')$ 
% i.e., either $(s_k,a_k,0,s_{k+1})$ for $0 \leq k < K-1$  or $(s_{K-1},a_{K-1},r,s'_0)$ in our IS-MDP,
from the replay buffer and using the following loss function:
\begin{align} \label{eq:loss_general}
  l(\omega) = \Big(\tQ(\ts,\ta\,;\omega) - \large(r+\tgam \max_{\ta ' \in \cA(s')}  \tQ(\ts ',\ta'\,;\omega') \large)  \Big)^{2}
\end{align}
% \begin{align} \label{eq:loss_general}
%   l_{\be}(\omega) = \Big(\tQ^{\omega}(\ts,\ta) - \large(r+\tgam \max_{\ta ' \in \cA(s')}  \tQ^{\omega}(\ts ',\ta') \large)  \Big)^{2}
% \end{align}
where $\omega'$ is the target parameter which follows the main parameter $\omega$ slowly. 
It is common to approximate $\tQ(\ts;\omega):\tcS \mapsto \Real^{|\tcA(s)|}$ rather than $\tQ(\ts, \ta;\omega)$ using a neural network so that all action values can be easily computed at once.

\section{Methods}
\label{sec:Methods}

\begin{comment}
In this section, we explain how the proposed ideas in previous section
are utilized to construct our EINET and derive
a progressive algorithm to efficiently solve the select-MDP.
% In section~\ref{sec:greedy-select-dqn},
We first explain our greedy select-DQN algorithm with $K$ different Q-networks for greedy select-MDP with $K$ selections.
% In section~\ref{sec:EINET},
Then, we introduce our parameter shared neural network architecture for Q-networks, and provide mathematical analysis for optimality of the network.
% in section~\ref{sec:math-einet}.
Finally, some sharing techniques are proposed to make the algorithm more efficient.
% in section~\ref{sec:inter-intra}.
\end{comment}

% \noteblue{
In this section, we present a symmetric property of IS-MDP,
which is referred to as {\em Equi-Invariance} (EI), and propose an
efficient RL algorithm to solve IS-MDP by constructing $K$ cascaded
Q-networks with two-levels of parameter sharing.

\subsection{IS-MDP: Equi-Invariance}
% In S-MDP, the orders in which the $N$ candidate items are listed or the $K$ items are selected have no effect in the transition probability $\tilde{\cP}$. 
% Since IS-MDP is dervied from S-MDP, we will define a similar symmetry called EI.  
As mentioned in Sec.~\ref{sec:is-mdp}, a state $s = (\bx,\bi)$ at
phase $k$ includes two sets $\bx$ and $\bi$ of observations, so that we
have some permutation properties related to the ordering of elements
in each set, i.e., for all $s,s' \in \cS$, $a\in \cA$, and $r\in \cR$, 
\begin{align} \label{eq:sym_tran}  \cP(s',r\mid s,a) \equiv \cP(s',r \mid \sigma_{s}(s),\sigma_{i}(a)). \end{align}
We denote $\sigma_{s} =(\sigma_{x}, \sigma_{i}) \in \bS_{k} \times \bS_{N-k}$ as a permutation of a state $s$ at phase $k$, which is defined as
\begin{align} \label{eq:sigma_s} 
  \sigma_{s}(s) := (\sigma_{x}(\bx), \sigma_{i}(\bi)),
\end{align}
where $\bS_k$ is a group of permutations of a set with $k$
elements.
From \eqref{eq:sym_tran}, we can easily induce that if the action $a=(n,c) \in \cA(s)$ is the optimal action for $s$, then for state $\sigma_{s}(s)$, an optimal policy 
 should know that an permuted action $\sigma_{i}(a):=(\sigma_{i}(n),c)$ is also optimal.
As a result, we have $\forall s \in \cS, \forall a \in \cA(s)$,
\begin{align} \label{eq:Q(s,a)Invariant}
Q^{\star}(s,a) = Q^{\star}(\sigma_{s}(s),\sigma_{i}(a)).
\end{align}
% \note{$\sigma_i(a)$ is not exact: maybe it' better to directly say
  % about $Q^\star(s)$, not $Q^\star(s,a)$.}

Focusing on Q-value function
$Q^\star(s) = [Q^\star(s,a)]_{a \in \cA(s)}$, as discussed in
Sec.~\ref{sec:dqn}, a permutation $\sigma_s=(\sigma_x,\sigma_i)$ of a state $s$ permutes the output of the function $Q^\star(s)$ according
to the permutation $\sigma_i$. In other words, a state $s$ and the permutation thereof, $\sigma_s(s)$, have {\em
  equi-invariant} optimal Q-value function $Q^\star(s)$. This is stated in the following proposition which is a rewritten form of \eqref{eq:Q(s,a)Invariant}.
\begin{prop} [Equi-Invariance of IS-MDP]
  \label{prop:ei_optimal}
  In IS-MDP, the optimal Q-function $Q^\star(s)$ of any state
  $s=(\bx,\bi) \in \cS$ is invariant to the permutation of a set $\bx$
  and equivariant to the permutation of a set $\bi$, i.e. for any
  permutation $\sigma_s=(\sigma_x,\sigma_i)$,
\begin{align} \label{eq:equi-inv property}
  Q^\star(\sigma_{s}(s)) = \sigma_{i}(Q^{\star}(s)).
\end{align}
\end{prop}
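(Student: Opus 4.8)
The plan is to recognize that \eqref{eq:equi-inv property} is simply the vectorized form of the scalar identity \eqref{eq:Q(s,a)Invariant}, and to supply the rigor behind that scalar identity. First I would fix the bookkeeping. Writing $Q^\star(s)=[Q^\star(s,a)]_{a\in\cA(s)}$ as a vector indexed by the unselected-item positions, the permutation $\sigma_i$ acts on this vector by relabeling its components, so that by convention $[\sigma_i(Q^\star(s))]_{\sigma_i(a)}=[Q^\star(s)]_a$. Under this convention the claimed equality $Q^\star(\sigma_s(s))=\sigma_i(Q^\star(s))$ is, componentwise, the statement $Q^\star(\sigma_s(s),\sigma_i(a))=Q^\star(s,a)$ for every $a\in\cA(s)$, which is exactly \eqref{eq:Q(s,a)Invariant}. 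Thus the proposition reduces to establishing this scalar invariance of the optimal Q-function, together with the (routine) translation back to the permuted-vector notation.

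To prove the scalar invariance from the symmetry of the dynamics I would argue at the level of the Bellman optimality operator $\mathcal{T}$, rather than rely on the heuristic ``if $a$ is optimal then $\sigma_i(a)$ is optimal''. Let $\mathcal{T}$ act on Q-functions by $(\mathcal{T}Q)(s,a)=\sum_{s',r}\cP(s',r\mid s,a)\big(r+\gamma\max_{a'\in\cA(s')}Q(s',a')\big)$, and call a Q-function EI-symmetric if $Q(\sigma_s(s),\sigma_i(a))=Q(s,a)$ for all states, actions and permutations. The core step is to show that $\mathcal{T}$ maps EI-symmetric functions to EI-symmetric functions. Since $\mathcal{T}$ is a $\gamma$-contraction with unique fixed point $Q^\star$, and the EI-symmetric functions form a closed subset of the complete space of bounded Q-functions, the fixed point must itself be EI-symmetric, which yields \eqref{eq:Q(s,a)Invariant}.

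Carrying out that core step is where the real work lies, and it is the step I expect to be the main obstacle. Starting from $(\mathcal{T}Q)(\sigma_s(s),\sigma_i(a))$, I would reindex the sum over next states by the permutation $\tau_s$ on phase-$(k+1)$ states that is \emph{induced} by $\sigma_s=(\sigma_x,\sigma_i)$, and invoke the dynamics equivariance \eqref{eq:sym_tran} in the precise form $\cP(\tau_s(s'),r\mid\sigma_s(s),\sigma_i(a))=\cP(s',r\mid s,a)$; the inner maximum $\max_{a'}Q(\tau_s(s'),a')$ then collapses to $\max_{a'}Q(s',a')$ because relabeling the actions of an EI-symmetric $Q$ leaves the maximum unchanged, giving $(\mathcal{T}Q)(\sigma_s(s),\sigma_i(a))=(\mathcal{T}Q)(s,a)$. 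The obstacle is making $\tau_s$ precise and verifying this equivariance across the two transition regimes of \eqref{eq:EQUIGreedyTran}: for $k<K-1$ the transition is the deterministic append/remove $(\bx,\bi)\mapsto(\bx+(i_n,c),\bi-i_n)$, where I must check that permuting the input consistently permutes the selected set (on which $Q^\star$ must be invariant) and the unselected tuple (on which it must be equivariant); for $k=K-1$ the kernel is $\tilde\cP$ on full S-MDP states, where invariance reduces to the fact that $\tilde\cP$ depends on the joint selection only as a set of (item, command) pairs, and is therefore insensitive to the order in which the $K$ items were picked. Once $\tau_s$ and these two checks are settled, the contraction argument and the componentwise translation of the first paragraph close the proof.
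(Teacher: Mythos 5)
Your proof is correct and follows essentially the same route as the paper: the paper obtains \eqref{eq:Q(s,a)Invariant} from the transition symmetry \eqref{eq:sym_tran} by appealing to the symmetry-in-MDPs argument of Zinkevich and Balch, which is precisely the Bellman-operator fixed-point argument you spell out, and then restates it in vector form as \eqref{eq:equi-inv property}. Your write-up is in fact more careful than the paper's, both in making the induced next-state permutation $\tau_s$ explicit (the paper's \eqref{eq:sym_tran} leaves $s'$ unpermuted) and in replacing the informal ``if $a$ is optimal for $s$ then $\sigma_i(a)$ is optimal for $\sigma_s(s)$'' step with the closed-subset contraction argument.
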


As we will discuss later, this EI property in \eqref{eq:equi-inv property} plays a critical
role in reducing state and action spaces by considering $(s,a)$ pairs
and permutations thereof to be the same. 
We follow the idea in \cite{zinkevich2001symmetry} to prove  Proposition~\ref{prop:ei_optimal}.

\subsection{Iterative Select Q-learning (ISQ)}
\label{sec:isq}

\paragraph{Cascaded Deep Q-networks} As mentioned in
Sec.~\ref{sec:is-mdp}, the dimensions of state and action spaces differ over phases. 
In particular, as the phase $k$ progresses, the set $\bx$ of the
state increases while the set $\bi$ and the action space $\cA(s)$
decrease. Recall that the action space of state $s \in \cS_k$ is $\cA(s) =
\{(n,c) \mid n \in \{1,\ldots,N-k\}, c \in \cC \}$. Then, Q-value
function at each phase $k$, denoted as
$Q_k(s)=[Q(s,a)]_{a \in \cA(s)}$ for $s \in \cS_k$, is characterized
by a mapping from
a state space $\cS_k$ to $\mathbb{R}^{(N-k)\times C}$,
where the $(n,c)$-th output element corresponds to the value $Q(s,a)$
of $a=(n,c) \in \cA(s)$.

To solve IS-MDP using a DQN-based scheme, we
construct $K$ deep Q-networks that are cascaded, where the $k$th Q-network, denoted
as $Q_k(s;\omega_k)$, approximates the Q-value function $Q_k(s)$ with
a learnable parameter vector $\omega_k$. We denote by
$\bomega = \{\omega_k \}_{0 \leq k < K}$ and
$\bomega' = \{ \omega_k' \}_{0 \leq k < K}$ the collections of the
main and target weight vectors for all $K$-cascaded Q-networks,
respectively. With these $K$-cascaded Q-networks,
% $\{Q_k(\cdot;\omega_k)\}_{0 \leq k < K}$,
DQN-based scheme can be applied to each Q-network $Q_k(s;\omega_k)$
for $0 \leq k < K$ using the associated loss function as in
\eqref{eq:loss_general} with $\omega = \omega_k$ and
$\omega' = \omega_{k+1}'$ (since $s' \in \cS_{k+1}$), which we name {\em Iterative Select Q-learning (ISQ)}.

% Some potential drawbacks of our ISQ algorithm include that
Clearly, a naive ISQ algorithm would have training challenges due to the large-scale of $N$ and $K$ since (i) number of parameters in each network $\omega_k$ increases as $N$ increases and 
(ii) size of the parameter set $\bomega$ also increases as $K$ increases.
To overcome these, we propose parameter sharing ideas which are described next.  
% (2) the entires parameters $\bomega$ increases as $K$ increases.
% Therefore, ISQ suffers from scalability when $N$ or $K$ is large.

\begin{figure*}[ht]
  \centering
  \includegraphics[width=.95\linewidth]{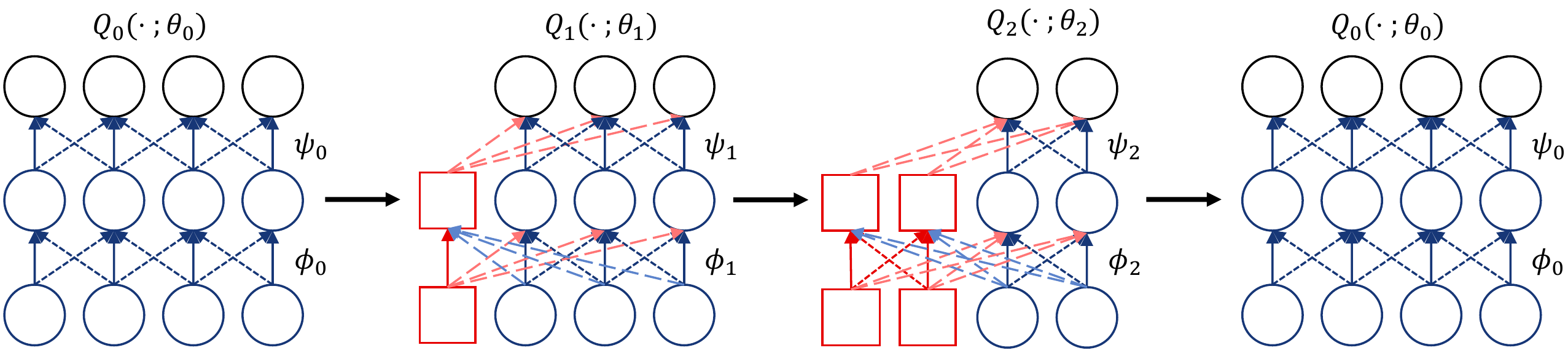}
  \caption[cross]{A simple example of the parameter-shared Q-networks $Q_{k}(\cdot \, ; \theta_{k})$ when $K=3, N=4, |\mathcal{C}|=1$.
  Red and blue colored nodes represent the nodes equivariant to the selected items $\bx$ and the unselected items $\bi$ respectively. Each black node represents the Q value for selecting the corresponding (item, command) pair.
  }
  \label{fig:Perphase_networks}
  % \vspace{-0.5cm}
\end{figure*}

\begin{comment}
Recall \eqref{eq:Q(s)optimal} that the optimal Q-functions $Q^{\star}(s)$ have different input and output  state dimensions depends on phase $k$.
Introducing IS-MDP to DQN motivates us to build $K$ cascaded Q-networks for each phase $k$, as illustrated in Fig.~\ref{fig:Perphase_networks}.
 $$Q_k(s; \omega_{k}): \set{S_k} \mapsto \real^{N-k} \otimes \real^{C}.$$
 $Q_k(\cdot \, ; \omega_{k})$ represents the Q-network for phase $k$ while $\omega_{k}$ and $\omega_{k}'$ represent the main and the target parameters.
 Each $Q_{k}(s \, ; \omega_{k})$ selects $(k+1)$th item with a command while considering the (information, command) of the previous selected items $\bx$ and the information of the remaine items $\bi$.
 Denote $\bomega:=(\omega_{0},\cdots,\omega_{K-1}) $ and $\bomega':=(\omega'_{0}, \cdots,\omega'_{K-1})$ are collections of the main and the target weight parameters for all phases.
% Accordingly, the loss functions $l_{k}$ for experience $(s,a,r,s')$ are defined  differently depends on phase $k$ of $s$ as below.
%  (i) for $0 \leq k < K-1,$
%  \begin{multline} \label{eq:loss_k}
%    l_{k}(\omega_{k}, \omega_{k+1}'):=
%    \left (\tQ_{k}(s,a) - \max_{a' \in \cA(s)}
%        \tQ_{k+1}(\ts', a') \right)^{2},
%    \end{multline}
%    % \item for $k=K-1,$
%  (ii) $k=K-1,$
%    \begin{multline} \label{eq:loss_K-1}
%     l_{k}(\omega_{k}, \omega_{k+1}'):=
%     \left (\tQ_{k}(s,a) - (r+ \gamma \max_{a'\in \cA(s)}
%         \tQ_{k+1}(\ts', a'))\right)^{2},
%    \end{multline}
% The above definition is a conversion of the loss function $L_{\omega}$ in \eqref{eq:loss_general} for general DQN.
% We differentiate the main Q-network $Q_{k}$ for current p-state $s_{k}$ from the target Q-network $Q_{k+1}$ for the next p-state $s_{k+1}$.
% The phase $k=K-1$ also can be handled  similarly.
% We set the discount factor equals to $\gamma$ only at final phase $K-1$ and set $1$ in the remaining cases for $k<K-1$.
% This is not strictly necessary, but we guess this method would help the inter-sharing among the cascaded Q-networks   described in Section \ref{sec:Intersharing}.
% is updated using the transition experience $(\ts_{k}, \ta_{k}, \tr,
% \ts_{k+1})$ sampled from the $k^{th}$ replay buffer.

It is true that IS-DQN requires the $K$ distinct Q-networks and the width of each Q-network grows as the number of the item $N$ increases.
This fact can be problematic in scalability when $N$ or $K$ grows.
In the following sections, we propose a two-levels of parameter
sharing: intra-parameter sharing and progressive-parameter sharing,
which overcomes such weakness, which is one of our key contributions
of this paper.

\end{comment}

\myparagraph{Intra Parameter Sharing (I-sharing)}
To overcome the parameter explosion for large $N$ in each Q-network,
we propose a parameter sharing scheme, called 
{\em intra parameter sharing} (I-sharing). Focusing on
the $k$th Q-network without loss of generality, the Q-network with I-sharing has a reduced
parameter vector $\theta_k$\footnote{To distinguish the parameters of Q-networks with and without I-sharing, 
we use notations $\theta_k$ and $\omega_k$ for each case, respectively.}, yet it satisfies 
the EI property in \eqref{eq:equi-inv property}, as discussed shortly. 

The Q-network with I-sharing $Q_k(\cdot;\theta_k)$ is a multi-layered
neural network constructed by stacking two types of parameter-shared
layers: $\phi_k$ and $\psi_k$. As illustrated in
Fig.~\ref{fig:Perphase_networks}, where the same colored and dashed weights are
tied together,
% \footnote{I-sharing scheme is a spacial case of the work
%   \cite{ravanbakhsh2017equivariance}.} 
 the layer $\phi_k$ is designed
to preserve an {\em equivariance} of the permutation
$\sigma_s = (\sigma_x, \sigma_i) \in \bS_k \times \bS_{N-k}$, while
the layer $\psi_k$ is designed to satisfy {\em invariance} of $\sigma_x$ as
well as {\em equivariance} of $\sigma_i$, i.e.,
\begin{align*}
  \phi_k(\sigma_s(\bx,\bi)) &= \sigma_s(\phi_k(\bx,\bi)), \cr
  \psi_k(\sigma_s(\bx,\bi)) &= \sigma_i(\psi_k(\bx,\bi)).
\end{align*}
Then, we construct the Q-network with I-sharing $Q_k(\cdot;\theta_k)$ 
by first stacking multiple layers of $\phi_k$ followed by a
single layer of $\psi_k$ as
\begin{align*}
Q_k(s; \theta_k) := \psi_k \circ \phi_k \circ \cdots \circ \phi_k(s),
\end{align*}
where $\theta_k$ is properly set to have tied values. Since
composition of the permutation equivariant/invariant layers preserves the
permutation properties, we obtain the following EI property
\begin{align*}
Q_k(\sigma_s(\bx,\bi); \theta_k) = \sigma_i(Q_k(\bx,\bi; \theta_k)).
\end{align*}
ISQ algorithm with I-sharing, termed ISQ-I, achieves a significant reduction of the number of parameters from $|\bomega| = O(N^2K)$ to
$|\btheta| = O(K)$, where $\btheta = \{ \theta_k\}_{0
  \leq k < K}$ is the collection of the parameters. 
  We refer the readers to our technical report\footnote{https://github.com/selectmdp} 
  for a more mathematical description. 

\myparagraph{Unified Parameter Sharing (U-sharing)}
%\label{sec:Intersharing}
% \note{What Reviewers want to know: (i) what is inter-parameter sharing? (ii) Why does it make sense? What intuition is working here? (iii) What's the expected weakness of this mehtod, and why is it expected to be insignificant? (iv) How many parameter reduction occurs? Thus, the total reduction from that without parameter sharing to that with intra- and inter-parameter sharing?}
% In this subsection, we propose second-level inter parameter sharing methods among the I-shared Q-networks.
We propose an another-level of weight sharing method for ISQ, called {\em unified parameter sharing} (U-sharing). 
% We first introduce an observation before describing the detailed methods.
% The observation is that
We observe that each I-shared Q-network $Q_{k}(\cdot \, ; \, \theta_{k})$ has a fixed number of parameters regardless of phase $k$.
This is well described in Fig.~\ref{fig:Perphase_networks}, where the number of different edges are the same in $Q_{1}$ and $Q_{2}$.
From this observation, we additionally share $\theta_{k}$ among the different Q-networks $Q_{k}$, i.e. $\theta_{0} = \cdots = \theta_{K-1}$.
U-sharing enables the reduction of the number of weights from 
$O(K)$ for $\btheta$ to $O(1)$ for $\theta_{0}= \cdots = \theta_{K-1}$. 
Our intuition for U-sharing is that
since the order of the selected items does not affect the transition of S-MDP, 
an item which must be selected during $K$ phases has the same Q-values in every phase.\footnote{Note that, we set the discount factor $\gamma=0$ during except the final phase $K-1$.}
% \footnote{The } 
% Therefore, we can guess that there are some similarities among $Q_{k}$s
This implies that the weight vectors $\theta_{k}$ may also have similar values.
However, too aggressive sharing such as sharing all the weights may experience significantly reduced expressive power. 
% However, an inherent problem still left from the reduced expressive power of $Q_{k}$s under the U-sharing.
\myparagraph{Progressive Parameter Sharing (P-sharing)}
To take the advantages of both I- and U-sharing, we propose a combined method called \textit{progressive parameter sharing} (P-sharing). In P-sharing, we start with a single parameter set (as in U-sharing) and then progressively double  the number of sets until it reaches $K$ (the same as I-sharing).
The Q-networks with nearby phases ($Q_{k}$ and $Q_{k+1}$) tend to share a parameter set longer as visualized in Fig.~\ref{fig:Progressive_learning}, which we believe is because they have a similar criterion.
In the early unstable stage of the learning,
% where the networks are not stable,
the Q-networks are trained sample-efficiently as they exploit the advantages of U-sharing.
As the training continues, the Q-networks are able to be trained more elaborately, with more accurate expressive power, by increasing the number of parameter sets. In P-sharing, the number of the total weight parameters ranges from $O(1)$ to $O(K)$ 
during training.

\begin{comment}
In real applications, external context information can be very helpful for the selection in S-MDP. 
For instance in the football league example, the enemy team's information can be useful to decide a lineup for the next match.
ISQ can handle this contextual information easily with a simple modification of neural network. 
Similar to invariant part for previously selected items (red parts in Fig.~\ref{fig:Perphase_networks}) of I-shared Q-network, we can add another invariant part in the Q-networks for the external context. 
\end{comment}

\begin{figure}[t!]
  \centering
  \includegraphics[width=.99\linewidth]{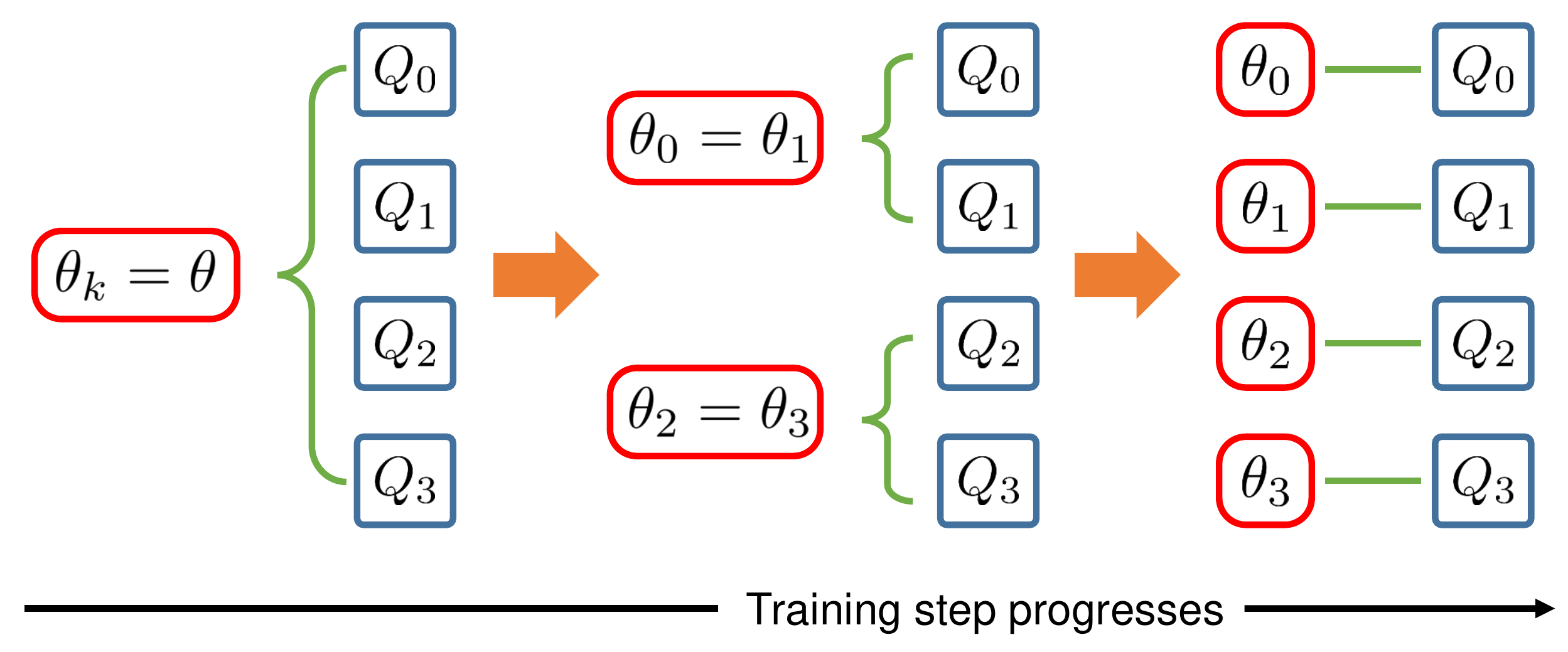}
  % \vspace{-0.2cm}
  \caption[cross]{Illustration of P-sharing for $K=4$. In the beginning, all Q-networks share the same weights. As the training progresses, we double the number of parameter sets until each Q-network $Q_{k}$ is trained with its own parameter vectors $\theta_{k}$.
  }
  % \vspace{-0.5cm}
  \label{fig:Progressive_learning}
\end{figure}

\section{Intra-Sharing: Relativ Local Optimality and Universal Approximation}
One may naturally raise the question of whether the I-shared Q-network $Q_{k}(s \, ; \theta_{k}): \cS_{k} \rightarrow \Real^{\vert \cA(s) \vert}$
% for phase $k$
has enough expressive power to represent the optimal Q-function
% :\cX^k \otimes \cI^{N-k}\rightarrow \Real^{N-k}\times \Real^{C}
$Q_{k}^{\star}(s): \cS_{k} \rightarrow \Real^{\vert \cA(s) \vert}$
of the IS-MDP despite the large reduction in the number of the parameters from $O(N^{2}K)$ to $O(K)$.
In this section, we present two theorems that show $Q_{k}(s \, ;\theta_{k})$ has enough expressive power to approximate $Q_{k}^{\star}(s)$ with the EI property in \eqref{eq:equi-inv property}. Theorem~\ref{thm:local optimal} states how I-sharing affects local optimality and Theorem~\ref{thm:universal} states whether the network still satisfies the universal approximation even with the equi-invariance property. Due to space constraint, 
we present the proof of the theorems in the technical report. 
We comment that both theorems can be directly applied to other similar weight shared neural networks, e.g., \cite{qi2017pointnet,zaheer2017deep,ravanbakhsh2017equivariance}.
% ,hartod2018sets}.
For presentational convenience, we denote
$Q_{k}^{\star}(s)$ as $Q^{\star}(s)$,
$Q_{k}(s \, ;\omega_{k})$ as $Q_{\omega}(s)$,
and $Q_{k}(s \, ;\theta_{k})$ as $Q_{\theta}(s).$

% The first theorem is  \textit{relative local optimality} for I-shared Q-network which proves that all local optimal parameters in I-shared Q-network are also local optimal parameters in vanilla Q-network under IS-MDP.
% Since many works of literature \cite{dauphin2014identifying, kawaguchi2016deep, laurent2018deep} insist that most of the local optimal parameters are good parameters in large-size neural networks,
% the relative local optimality theorem strongly implies that I-sharing q-network has the similar expressive power compared to a vanilla q-network with the same architectures.
% The second theorem is \textit{universal approximation theorem} of I-sharing q-network which show I-sharing q-network can approximate any q-function for IS-MDP if we enlarge a number of the nodes in the hidden layers.

% \subsection{Relative Local Optimality}
% \label{sec:math-einet}
% \note{Key question: What kind of special things in this parameter sharing that leads to this local optimality? Why? Why?}

% \note{This paragraph includes a lot of repetition of what is explained in the theorem. In this paragraph, it's enough to define some mathematical notions such as meaning of non-parameter shared network, and the definition of loss function. Then, present Theorem, and then present its interpretation.}
% In this subsection, we mathematically analyze the expressive power of our intra-parameter sharing $Q(\ts; \omega \in \Omega)$ to approximate the optimal action-value function $Q^{\star}$ with EI property.

\myparagraph{Relative Local Optimality} We compare the expressive power of I-shared Q-network $Q_{\theta}$ and vanilla Q-network $Q_{\omega}$ of the same structure when approximating a function $Q^{\star}$ satisfies the EI property.
% Then, we explain how this theorem suggests a confident belief of enough expressive power of intra-sharing to approximate $Q^{\star}_{k}(s_{k})$ with EI.
%in \eqref{eq:equi-inv property}.
% We first clarify some notations and concepts for the theorem.
% for the local optimal parameter vectors.
Let $\Theta$ and $\Omega$ denote weight vector spaces for $Q_{\theta}$ and $Q_{\omega}$, respectively.
% As described in Section \ref{sec:EINET}, $\theta$ is a small dimensional parameter vector compared to $\omega$ while $Q_{\theta}$ and $Q_{\omega}$ share the same architecture.
Since both $Q_{\omega}$ and $Q_{\theta}$ have the same network sructure,
we can define a projection mapping $\omega: \Theta \rightarrow \Omega$ such that $Q_{\omega(\theta)} \equiv Q_{\theta}$ for any $\theta$.
% the intra-parameter shared network $Q_{\theta}$ also be represented by a lower dimensional parameter vector $\theta$ compared to non-sharing parameter vector $\omega$.
Now, we introduce a loss surface function $l_{\Omega}(\omega)$ of the weight parameter vector $\omega$:
$$   l_{\Omega}(\omega) := \sum_{s \in B} |  Q_{\omega}(s) - Q^{\star}(s)|^2, $$
where $B \subset \cS_{k}$ is a batch of state samples at phase $k$ and $Q^{\star}(s)$ implies the true Q-values to be approximated.
Note that this loss surface $l_{\Omega}$ is different from the loss function of DQN in \eqref{eq:loss_general}.
However, from the EI property in $Q^{\star}(s)$, we can augment additional true state samples and the true Q-values by using equivalent states for all $\sigma_{s} \in \bS_{k}\times \bS_{N-k}$,
$$ L_{\Omega}(\omega) := \sum_{\sigma_{s} \in \bS_k \times \bS_{N-k} } \left(  \sum_{s \in B} |  Q_{\omega}(\sigma_{s}(s)) - Q^{\star}(\sigma_{s}(s))|^2 \right). $$
We denote the loss surface $L_{\Theta}(\theta):=L_{\Omega}(\omega(\theta))$ in the weight shared parameter space $\Theta$.
% $L_{\Omega}$ is defined by considering the whole permutations of the items in p-state $\sigma(s_{k})$ to ignore the orders of the items.
% The loss surface $L_{\Omega}(\omega)$ measures the gap between the optimal $Q^{\star}$ and $Q_{w}$ which can not be measured directly since we do not know exact $Q^{\star}$.
% Therefore, when DQN updates $\omega$, it  just indirectly measures loss $l_{Q}(\omega)$ from the gap between main q-values in $Q_{w}$ and target q-values $r+\gamma Q_{\omega '}$.
%Here is our theorem.
% The detailed proof is given in \cite{techreport}.
\begin{theorem}[Relative Local Optimality] \label{thm:local optimal}
  If $\theta^{\star} \in \Theta$ is a local optimal parameter vector of the loss surface $L_{\Theta}(\theta)$, then the projected parameter $\omega(\theta^{\star}) \in \Omega$ is also the local optimal point of $L_{\Omega}(\omega)$.
\end{theorem}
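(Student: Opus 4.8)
The plan is to recognize Theorem~\ref{thm:local optimal} as an instance of the \emph{principle of symmetric criticality}: the augmented loss $L_\Omega$ is symmetric under a group acting on the full weight space $\Omega$, and the tied weights $\omega(\Theta)$ are exactly the symmetric (fixed) weights, so that optimality inside the symmetric subspace should propagate to the ambient space. First I would lift the input symmetry to the weights. The common architecture of $Q_\omega$ carries, for each permutation $\tau=(\tau_x,\tau_i)\in \bS_k\times \bS_{N-k}$, a companion \emph{orthogonal} reindexing $T_\tau$ of the coordinates of $\omega$ (it permutes the per-item weight blocks) satisfying $Q_{T_\tau\omega}(s)=\tau_i\!\left(Q_\omega(\tau^{-1}(s))\right)$. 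Two facts then must be checked: (i) the weights fixed by every $T_\tau$ form precisely the tied-weight subspace $\omega(\Theta)$ (this is the definition of I-sharing, equivalently the EI property of Proposition~\ref{prop:ei_optimal} holding for $Q_\omega$); and (ii) $L_\Omega$ is invariant, $L_\Omega(T_\tau\omega)=L_\Omega(\omega)$. For (ii) I would substitute the companion relation, use the EI property of the target to write $Q^\star(\sigma_s(s))=\tau_i\!\left(Q^\star(\tau^{-1}\sigma_s(s))\right)$, pull the norm-preserving $\tau_i$ out of the squared difference, and reindex the orbit sum via the bijection $\sigma_s\mapsto\tau^{-1}\sigma_s$ of the group.

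With the symmetry in place the first-order part is short. Differentiating the invariance identity and using that each $T_\tau$ is orthogonal yields gradient equivariance $\nabla L_\Omega(T_\tau\omega)=T_\tau\,\nabla L_\Omega(\omega)$. Evaluating at the symmetric point $\omega^\star:=\omega(\theta^\star)$, which satisfies $T_\tau\omega^\star=\omega^\star$, forces $\nabla L_\Omega(\omega^\star)$ to be fixed by every $T_\tau$, i.e. $\nabla L_\Omega(\omega^\star)\in\omega(\Theta)$. Since $\omega:\Theta\to\omega(\Theta)$ is a linear isomorphism and $L_\Theta=L_\Omega\circ\omega$, the stationarity coming from local optimality of $\theta^\star$ says that the component of $\nabla L_\Omega(\omega^\star)$ along $\omega(\Theta)$ vanishes; as the whole gradient already lies in $\omega(\Theta)$, we conclude $\nabla L_\Omega(\omega^\star)=0$, so $\omega^\star$ is a critical point of $L_\Omega$.

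To upgrade ``critical'' to ``local optimum'' I would examine the Hessian $H^\star=\nabla^2 L_\Omega(\omega^\star)$. Invariance of $L_\Omega$ makes $H^\star$ commute with every orthogonal $T_\tau$, so $H^\star$ respects the orthogonal splitting $\Omega=\omega(\Theta)\oplus\omega(\Theta)^\perp$ and is block diagonal. On the $\omega(\Theta)$ block, $H^\star$ equals the Hessian of $L_\Theta$ at $\theta^\star$ pulled back through the isometric embedding $\omega$, which is positive semidefinite by the local optimality of $\theta^\star$. What remains is to control the complementary block on $\omega(\Theta)^\perp$.

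The main obstacle is exactly this complementary block: symmetry alone does \emph{not} force positive semidefiniteness, since a generic invariant function (e.g. $x^2-y^2$, invariant under $y\mapsto -y$) has a saddle whose restriction to the fixed axis is a minimum. The dangerous term is the residual-times-curvature contribution $\sum e_{\omega^\star}(s)^\top D^2 Q_{\omega^\star}(s)[v,v]$ for directions $v\in\omega(\Theta)^\perp$, where $e_{\omega^\star}=Q_{\omega^\star}-Q^\star$. I would close the gap by controlling this residual: invoking the universality result (Theorem~\ref{thm:universal}) so that at a global optimum $e_{\omega^\star}\equiv 0$ and the complementary block collapses to the manifestly positive semidefinite Gram form $\sum |D Q_{\omega^\star}(s)[v]|^2$, or otherwise reducing the claim to the first-order stationarity that the symmetric-criticality argument already delivers. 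Proving that no descent direction breaks the EI symmetry (i.e.\ positive semidefiniteness of the complementary block) is the crux; the group action, chain rule, and orbit reindexing are routine bookkeeping by comparison.
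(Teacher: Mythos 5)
Your first-order analysis is sound and matches the paper's machinery in spirit: the paper also lifts each $\sigma \in \bS_N$ to a reindexing of the weight vector satisfying $\sigma(Q_{\omega}(\bi)) = Q_{\sigma(\omega)}(\sigma(\bi))$, notes that the tied weights are exactly the fixed points of this action, and proves the invariance identity $L_{\Omega}(\omega(\theta)+\omega_0) = L_{\Omega}(\omega(\theta)+\sigma(\omega_0))$ for every perturbation $\omega_0$ (their Lemma on the permutation-invariant loss). Where you diverge is in how you try to upgrade criticality to local optimality, and this is where your proposal stalls while the paper closes. The paper never touches the Hessian. It argues contrapositively: if $\omega(\theta^{\star})$ were not a local optimum, there would be a direction $\omega_0$ with $D_{\omega_0}L_{\Omega}(\omega(\theta^{\star}))<0$; by the invariance identity every permuted direction $\sigma(\omega_0)$ has the \emph{same} negative directional derivative; by linearity of the directional derivative their sum $\overline{\omega}=\sum_{\sigma}\sigma(\omega_0)$ still has negative derivative; and $\overline{\omega}$ is symmetric, hence equals $\omega(\overline{\theta})$ for some $\overline{\theta}\in\Theta$, contradicting local optimality of $\theta^{\star}$. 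The missing idea in your write-up is precisely this orbit-averaging of an \emph{arbitrary} descent direction back into the tied subspace --- it converts ``no descent direction in $\Theta$'' into ``no descent direction in $\Omega$'' directly, with no decomposition of $\Omega$ into $\omega(\Theta)\oplus\omega(\Theta)^{\perp}$ and no control of a complementary curvature block. Your proposed escape routes (invoking universality to kill the residual, or retreating to mere stationarity) are not what the paper does and would not prove the theorem as stated.

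That said, your worry is not baseless, and it is worth being precise about what the paper's argument actually delivers. The step ``not a local minimum $\Rightarrow$ there exists a direction of strictly negative directional derivative'' is a first-order characterization that fails for general smooth functions (higher-order saddles such as $-x^4$ at the origin), and your $x^2-y^2$ example correctly shows that symmetry alone cannot force positive semidefiniteness transverse to the fixed subspace. So the paper's proof, read strictly, establishes that first-order descent directions in $\Omega$ at $\omega(\theta^{\star})$ are excluded --- the same conclusion your gradient argument reaches --- and the gap you identified at second order is real but is silently absorbed by the paper's identification of ``local optimum'' with ``no strictly descending direction.'' If you adopt that same (first-order) reading of local optimality, the orbit-averaging trick is all you need, and your Hessian analysis becomes unnecessary.
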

% \begin{proof}[(Sketch of Proof).]
% We use a contradiction which assumes that there exists a local optimum $\theta^{\star}$ of $L_{\Theta}$ but $\omega(\theta^{\star}) $ is not the local optimum of $L_{\Omega}$.
% From this assumption, there exists a direction vector $v \in \Omega$ at $\omega(\theta^{\star})$ which has negative derivative of the loss surface $L_{\Omega}$, i.e., $\exists v \in \Omega$, $D_{v}(L_{\Omega}(\omega(\theta^{\star}))) <0$.
% However, from the symmetrical property in $Q_{\omega(\theta^{\star})}$ and $L_{\Omega}$, we can define additional direction vectors $\sigma_{s}(v) \in \Omega$ for all $\sigma{s}=(\sigma_{x}, \sigma_{i}) \in \bS_{k} \times \bS_{N-k}$ that makes $\sigma_{y} (Q_{\omega(\theta^{\star})+v}) \equiv  Q_{\omega(\theta^{\star})+\sigma(v)}  $.
% These $\sigma(v)$s also satisfy $D_{\sigma(v)}(L_{\Omega})= D_{v}(L_{\Omega}) <0$.
% The summed direction vector $\overline{v}:=\sum_{\sigma \in \bS_{k} \times \bS_{N-k}} \sigma(v)$ has negative derivative and there is a direction vector $\omega^{-1}(\overline{v}) \in \Theta$ from the symmetry of $\overline{v}$.
% This fact conflicts to the asummption that $\theta^{\star}$ is a local optimal point of $L_{\Theta}$.
% \end{proof}

It is notoriously hard to find a local optimal point by using gradient descent methods because of many saddle points in high dimensional deep neural networks \cite{dauphin2014identifying}.
However, we are able to efficiently seek for a local optimal parameter $\theta^{\star}$ on the smaller dimensional space $\Theta$, rather than exploring $\Omega$.
The quality of the searched local optimal parameters $\omega(\theta^{\star})$ is reported to be reasonable 
that most of the local optimal parameters give nearly optimal performance in high dimensional neural networks \cite{dauphin2014identifying,kawaguchi2016deep,laurent2018deep} 
To summarize, Theorem~\ref{thm:local optimal} implies that $Q_{\theta}$ has similar expressive power to $Q_{\omega}$ if both have the same architecture.

% Relative local optimality is not only limited to the loss surface for Deep Q-network but also can be applied to more general supervised learning cases.
% Corollary~\ref{thm:coro_local} is an variation of Theorem~\ref{thm:local optimal} with the \textit{relative lcoal optimality} for an arbitrarily continuous function $f$ with permutation equi-invariant property.
% \begin{corollary} \label{thm:coro_local}
%   A continuous function $f: \cX ^{k} \otimes \cI^{N-k} \rightarrow \Real^{N-k} \otimes \Real^{C}$ satisfies the permutation equi-invariant property.
%   The relative local optimality also establishes with a loss surface $L'_{\Omega}(\omega)$ defined as below.
% $$   L'_{\Omega}(\omega) := \sum_{\sigma \in \bS^{k} \times \bS^{N-k}} | Q^{\star}(\sigma(\bx,\bi)) - Q_{\omega}(\sigma(\bx,\bi))|^2. $$
% \end{corollary}

% $ \\ $
\myparagraph{Universal Approximation}
We now present a result related to the universality of $Q_{\theta}(s)$ when it approximates $Q^{\star}(s)$.
\begin{theorem}[Universal Approximation] \label{thm:universal}
Let  $ Q^{\star}:\cS_{k} \rightarrow \Real^{(N-k)\times C}$ satisfies EI property.
If the domain spaces  $\cI$ and $\cC$ are compact, for any $\epsilon>0$, there exists a $4$-layered I-shared neural  network $Q_{\theta}: \cS_{k} \rightarrow \Real^{(N-k)\times C}$ with a finite number of neurons, which satisfies $$ \forall  s \in \cS_{k}, \quad  \vert Q^{\star}(s) - Q_{\theta}(s) \vert < \epsilon. $$
\end{theorem}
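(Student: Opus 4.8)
The plan is to reduce the claim to two classical universal-approximation facts — the Deep Sets representation of permutation-invariant continuous functions and the Cybenko/Hornik theorem for plain feedforward networks — and then to verify that the structured I-shared layers $\phi_k,\psi_k$ can realize the resulting construction within four layers. First I would observe that the hypotheses make the domain compact: since $\cI$ and $\cC$ are compact, $\cS_k = (\cI\times\cC)^k \times \cI^{N-k}$ is a finite product of compact spaces and hence compact, so it suffices to approximate a continuous $Q^\star$ uniformly.

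Next I would pin down the functional form forced by the EI property \eqref{eq:equi-inv property}. Equivariance in $\bi$ means $Q^\star(\bx,\sigma_i(\bi))_n = Q^\star(\bx,\bi)_{\sigma_i^{-1}(n)}$; swapping coordinate $n$ with coordinate $1$ shows that every output coordinate is the \emph{same} function of its own item and of the remaining data, i.e.
$$ Q^\star(s)_{n} = H\big(i_n;\ \{i_m : m \neq n\};\ \{x_1,\dots,x_k\}\big), $$
where $H$ is symmetric (invariant) in each of its two multiset arguments; invariance in $\bx$ is exactly the symmetry in the third argument. This isolates the two places where set symmetry must be handled.

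The core step is to replace the two symmetric arguments by sum-pooled learnable features. By the Deep Sets representation theorem there are continuous maps $\alpha,\beta$ and an outer map $\rho$ with
$$ H\big(i_n; \{i_m\}_{m\neq n}; \{x_j\}\big) \approx \rho\Big( i_n,\ \textstyle\sum_{m} \alpha(i_m) - \alpha(i_n),\ \sum_{j}\beta(x_j)\Big), $$
uniformly on the compact domain; writing the "other items" pool as (full pool $-$ self term) is what lets a single equivariant layer supply it, because the I-shared equivariant layer produces, for each coordinate, both the per-element term and the global sum. I would then approximate $\alpha$ and $\beta$ by one nonlinear $\phi_k$-layer each, form the two pooled vectors in the next $\phi_k$-layer, and realize the outer map $\rho$ — which is invariant in $\bx$ and equivariant in $\bi$ — by the readout layer $\psi_k$ preceded by one hidden layer, applying Cybenko/Hornik to $\rho$. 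Tracking the depth (per-element lift, aggregation, and a two-layer readout realizing $\rho$) gives the claimed four I-shared layers, and since each composed layer preserves the permutation properties the resulting $Q_\theta$ automatically satisfies \eqref{eq:equi-inv property}.

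The main obstacle I anticipate is the equivariant universality bookkeeping rather than the invariant part: one must ensure the structured family $z_n \mapsto A z_n + B\sum_{m} z_m + b$ followed by a pointwise nonlinearity is expressive enough to implement both the per-element lifting and the "self minus aggregate" separation with bounded depth, and that the latent feature dimension can be chosen finite for an $\epsilon$-approximation (the exact Deep Sets representation would require a latent dimension tied to $N-k$, but uniform $\epsilon$-approximation on a compact set sidesteps this). Making the four-layer count tight — as opposed to merely "some constant number of layers" — is the delicate part, and is where I would spend the most care, fixing precisely which computations are folded into $\phi_k$ versus $\psi_k$.
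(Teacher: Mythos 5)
Your proposal follows essentially the same route as the paper's proof: both reduce the EI property to per-output-coordinate permutation invariance in $\bx$ and in the remaining unselected items, invoke the Deep Sets sum-decomposition (Theorem 7 of Zaheer et al.) to write that coordinate as $H\bigl(\sum_{x\in\bx}\xi_{x}(x),\, i_{n},\, \sum_{i}\xi_{i}(i)\bigr)$, and then realize $\xi_{x},\xi_{i}$ and $H$ with two-layer universal approximators folded into the four I-shared layers. The only cosmetic differences are that you phrase the ``other items'' pool as the full sum minus the self term and allow an approximate rather than exact sum-decomposition, whereas the paper proves an exact decomposition as a separate lemma and then assigns the weights explicitly; be aware that a single $\phi_k$-layer is not by itself a universal approximator of $\alpha$ or $\beta$, so (as in the paper) each inner map must span two of the four layers, with the pooling absorbed into the second.
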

% \note{Place a short description about the theorem 2.}
% The detailed proof is presented in \cite{techreport}.
% \begin{proof}[(Sketch of Proof).]
%   We first focus on to approximate $i$th component $[Q^{\star}(\bx,\bi)]_{i}$ of $Q^{\star}(\bx,\bi): \cX^{k} \otimes \cI^{N-k} \rightarrow \Real^{C}$, i.e.,
%   $$ Q^{\star}(\bx,\bi):=([Q^{\star}(\bx,\bi)]_{1},\cdots, [Q^{\star}(\bx,\bi)]_{N-k} ). $$
%   From the definition of $Q^{\star}(\bx,\bi)$, $[Q^{\star}(\bx,\bi)]_{i}$
%   is permutation invariant to the orders of the elements in $\bx=(x_{1}, \cdots, x_{k})$, $\bi_{-j}:=(i_{1}, \cdots, i_{j-1}, i_{j+1}, \cdots, i_{N-k})$, and $i_{j}$ by symmetric groups $\bS^{k}, \bS^{N-k-1}, $ and $ \bS^{1}$ respectively.
%   We will derive that any continuous function with the above property can be decomposed in the form of $ H(\sum_{x \in \bx} h_{\bx}(x)+\sum_{i \in \bi_{-j}}h_{\bi}(i) + h_{i}(i_{j}))$ where $H, h_{\bx}, h_{\bi},$ and $h_{i}$ are proper continuous functions.
%   Finally, we will show that
%   I-shared Q-network $Q_{\omega(\theta)}$ with more than three hidden layers can approximate the aforementioned decomposed form.
% \end{proof}

Both Theorems~\ref{thm:local optimal} and \ref{thm:universal} represent the expressive power of the I-shared neural network for approximating an equi-invariant function. However, they differ in the sense that 
Theorem~\ref{thm:local optimal} directly compares the expressive power of the I-shared network to the network without parameter sharing, whereas Theorem~\ref{thm:universal} states the potential power of the I-shared network that 
any function $f$ with EI property allows good approximation as the number of nodes in the hidden layers sufficiently increase.

\section{Simulations} \label{sec:eval}
%\noindent {\bf Tasks}

% need to explain two tasks as a sequential combinatorial selection problem.
%
% We compare the performance of ISQ in aformentioned tasks with a rule-based heuristic and variants of DQN \note{refs}.
%
%\note{need to explain the idea of baseline algorithms for comparison: a rule-based heuristic and RL algorithms, with references: key difference of those baseline algorithms compared to ours.}.
% hyperparameters: type of optimizer, learning rate, batch size, target network update frequency, and replay buffer size.
\subsection{Environments and Tested Algorithms}
% To evaluate ISQ, we consider two different tasks for S-MDP:
% a synthetic environment Circle Selection (CS),
% and Selective Predator and Prey (PP) which is a modified version of
% Predator and Prey in \cite{stone2000multiagent} for MARL problem.

\begin{figure*}[ht]
  \centering
  \captionsetup[subfloat]{captionskip=-0.03cm}
  \subfloat[Single selection ($K=1$)]{\label{fig:k1n52050}
  \includegraphics[width=0.33\textwidth]{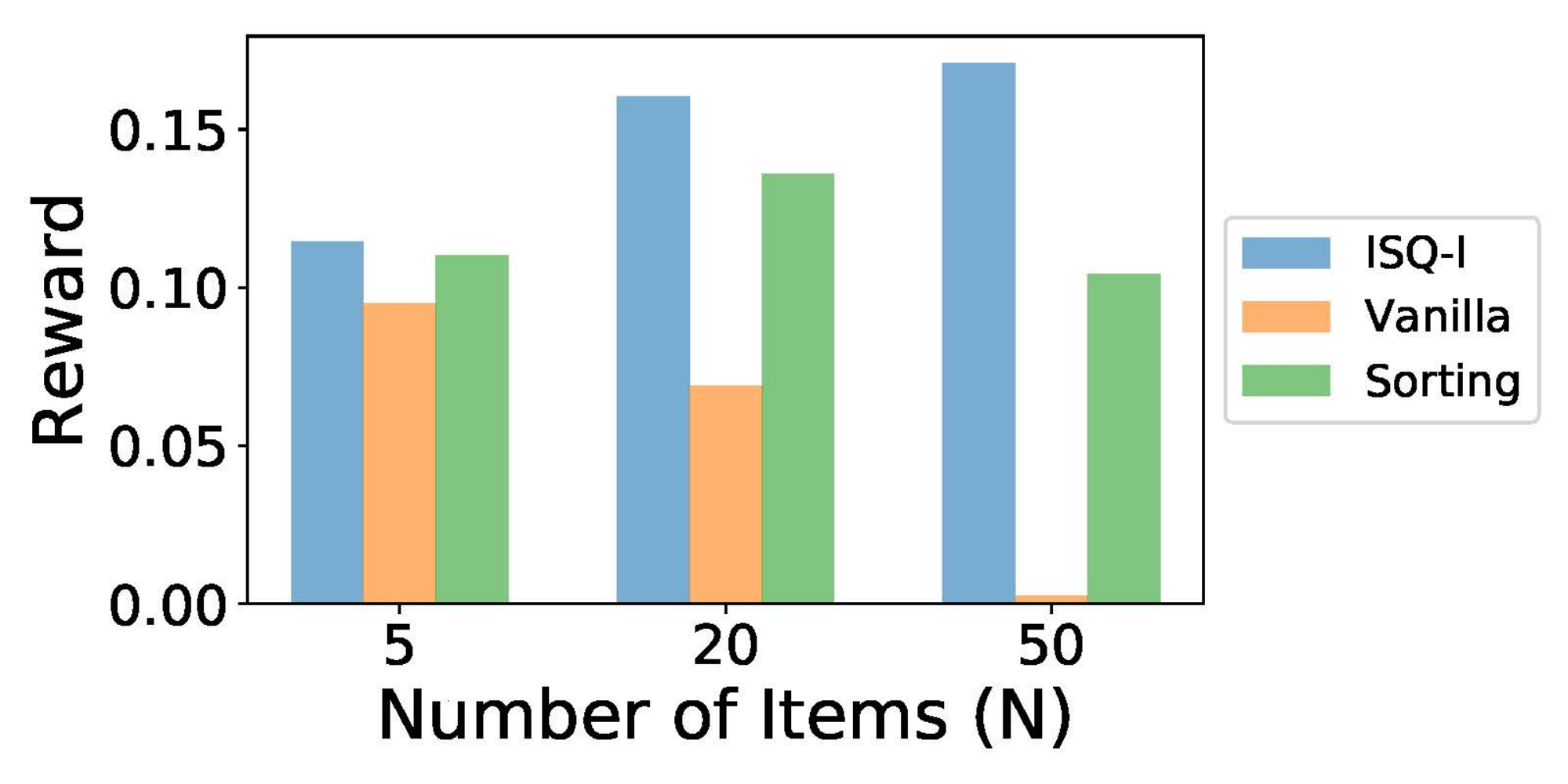}}
  \subfloat[$N=50,K=6$]{\label{fig:k6n50}\includegraphics[width=0.33\textwidth]{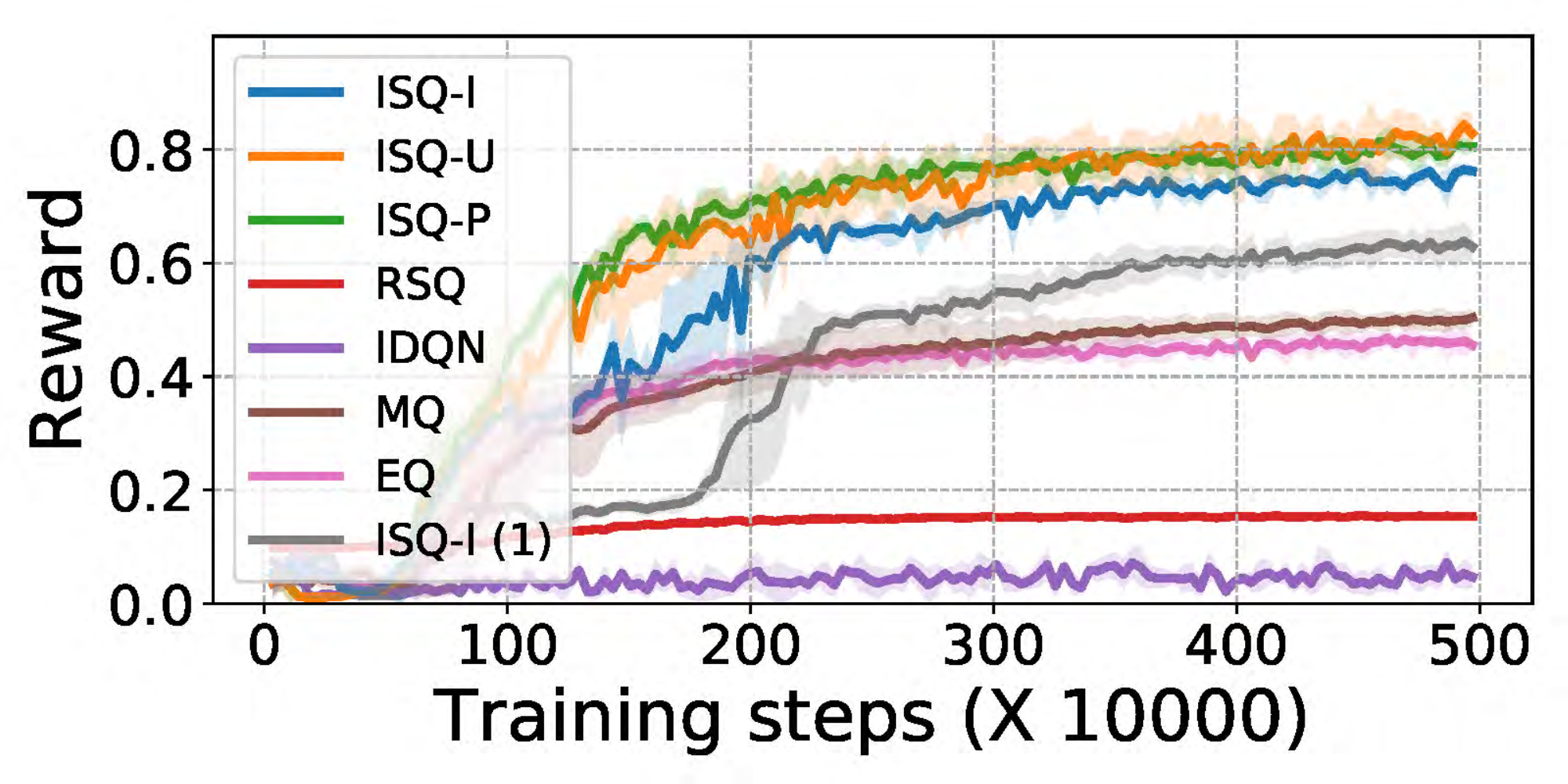}}
  \subfloat[$N=200,K=6$]{\label{fig:k6n200}\includegraphics[width=0.33\textwidth]{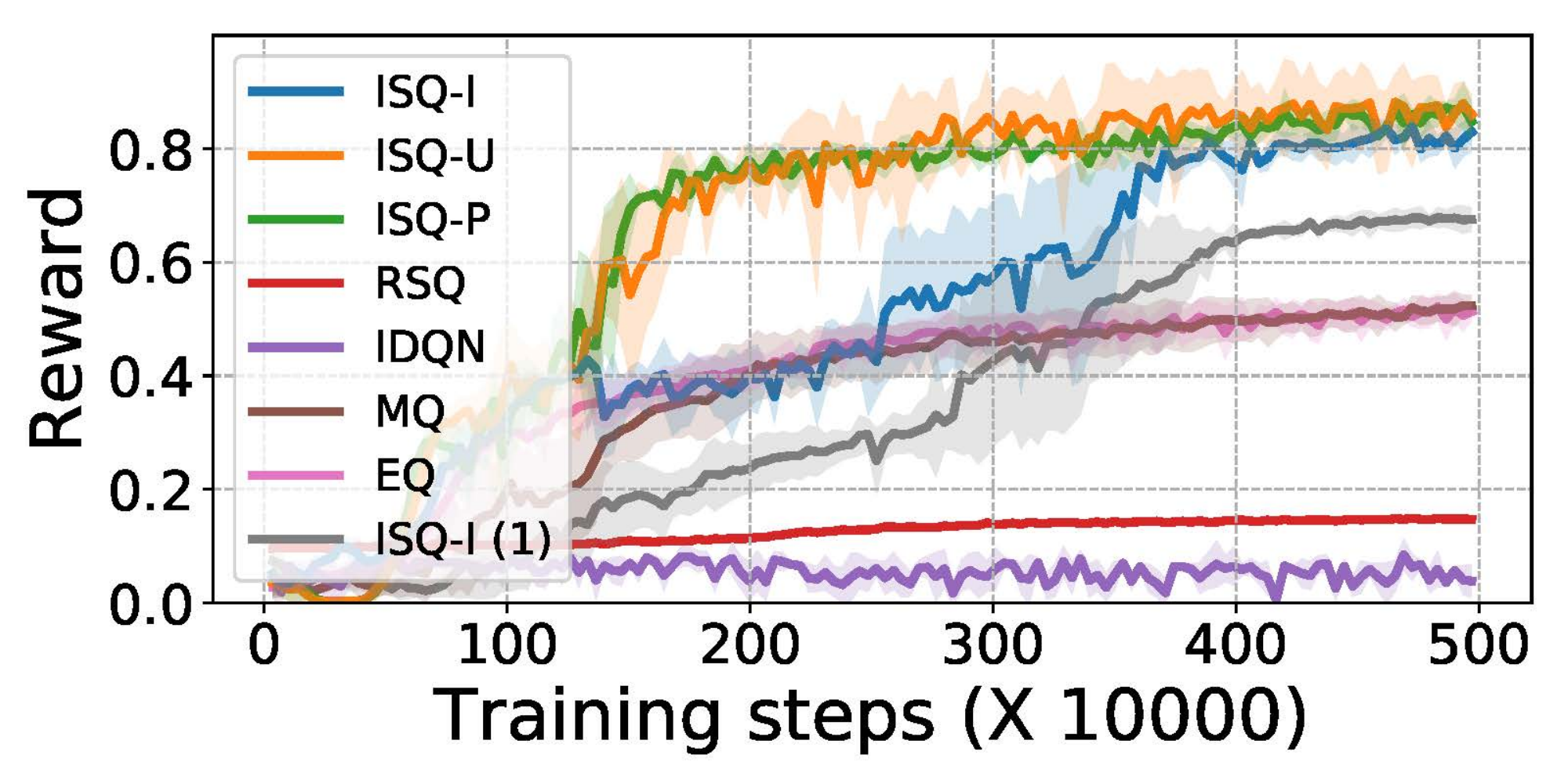}}
  % \vspace{-0.25cm}
  \caption[cross]{
    % \note{HS: not updated other baselines yet} Simulation results for CSP ($K=6$).
  % Fig.~\ref{fig:k6n50} and Fig.~\ref{fig:k6n200} show 
  Performances for CS tasks.   (a): final performances of the methods for single selection with $N=5,20,50$. (b) and (c): 
  learning curves for $K=6,U=0$  with $N=50,200$.
  ISQ-I (1) corresponds to the ISQ-I with a single command `stay'.}
  % \vspace{-0.5cm}
  \label{fig:K6}
\end{figure*}

\begin{figure*}[ht]
  \centering
  \subfloat[$N=10, K=4$ \label{fig:K4N10} ]{\includegraphics[width=0.33\textwidth]{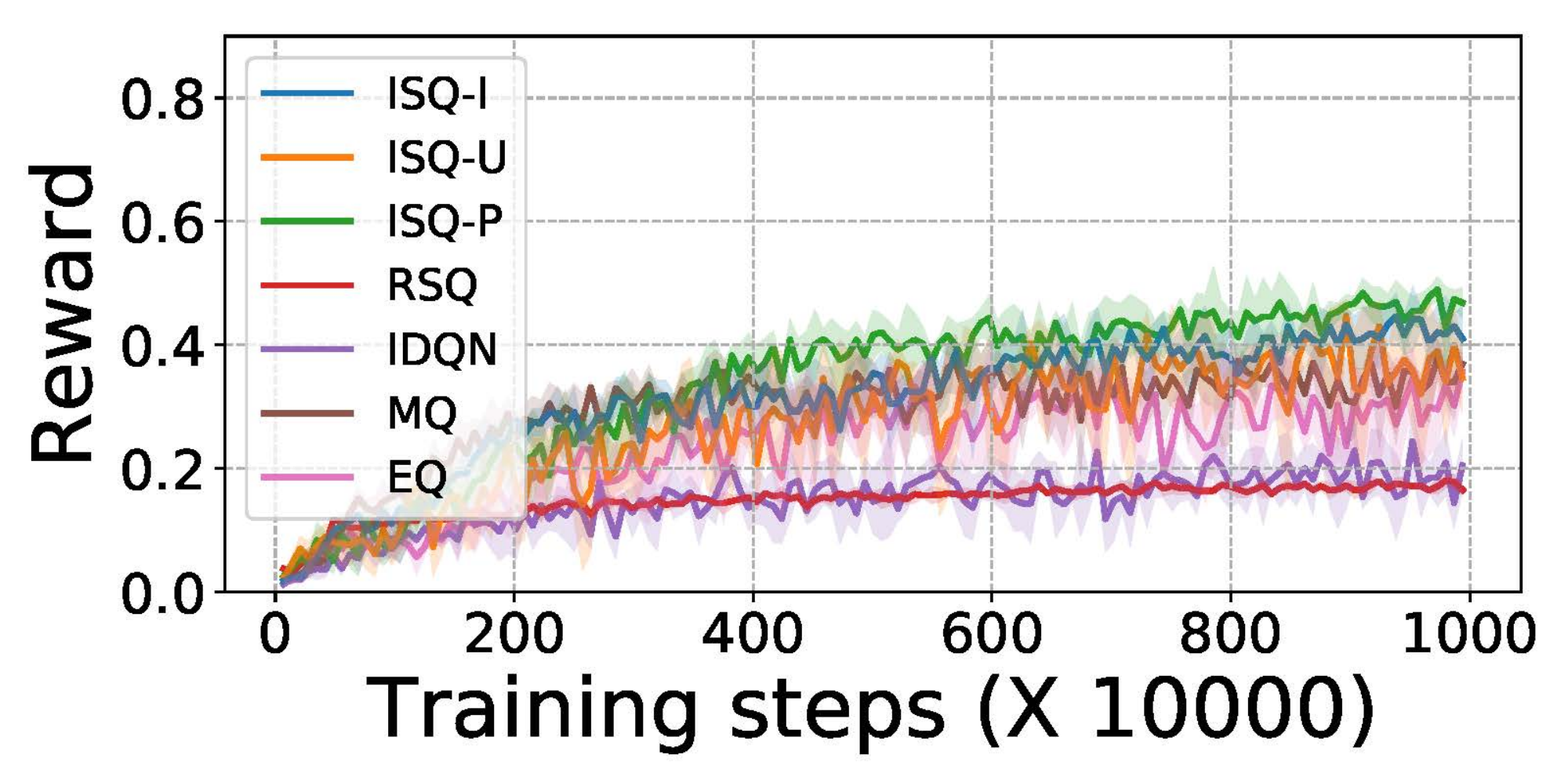}}
  \subfloat[$N=10, K=7$\label{fig:K7N10}]{\includegraphics[width=0.33\textwidth]{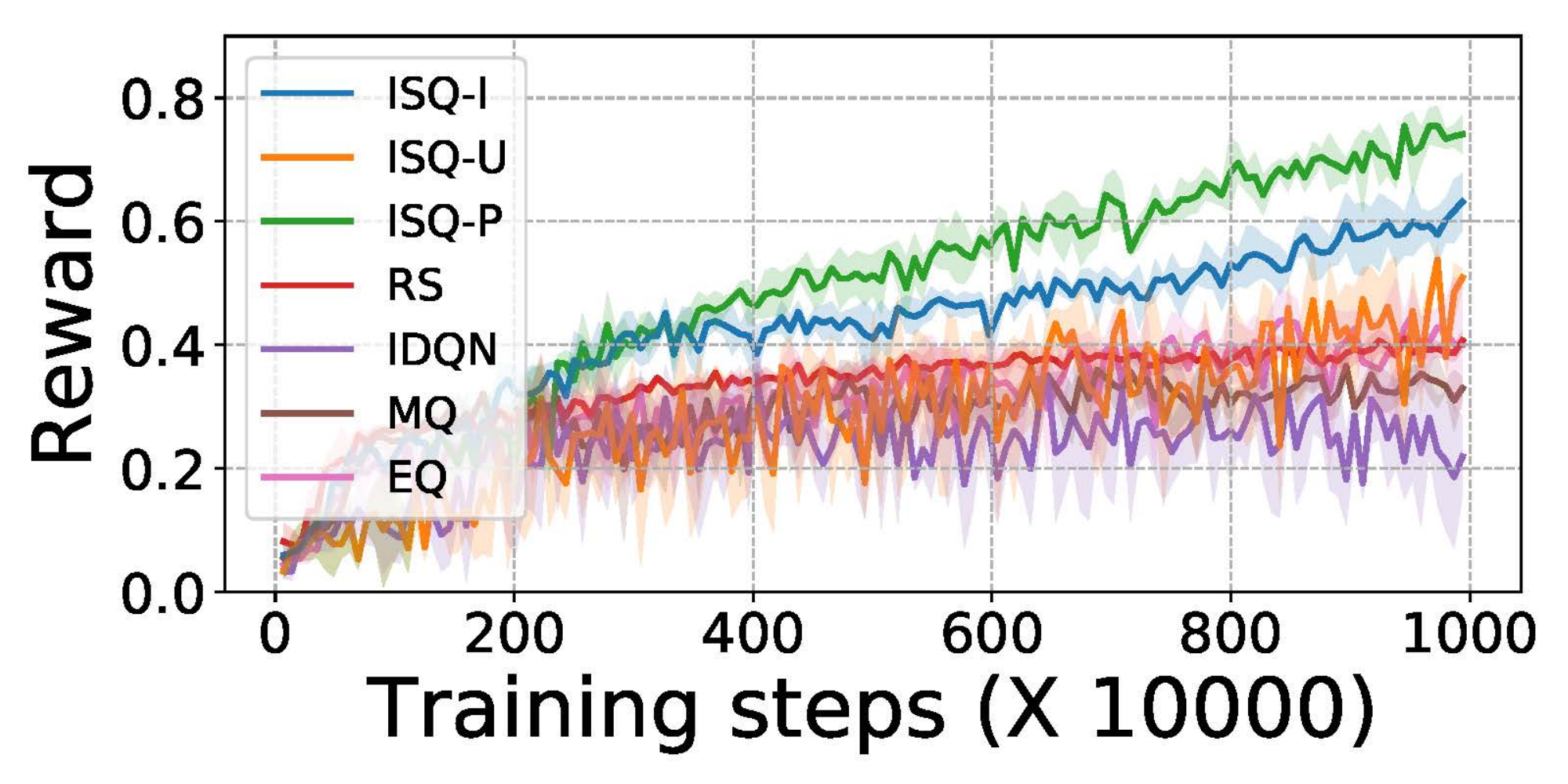}}
  \subfloat[$N=10, K=10$\label{fig:K10N10}]{\includegraphics[width=0.33\textwidth]{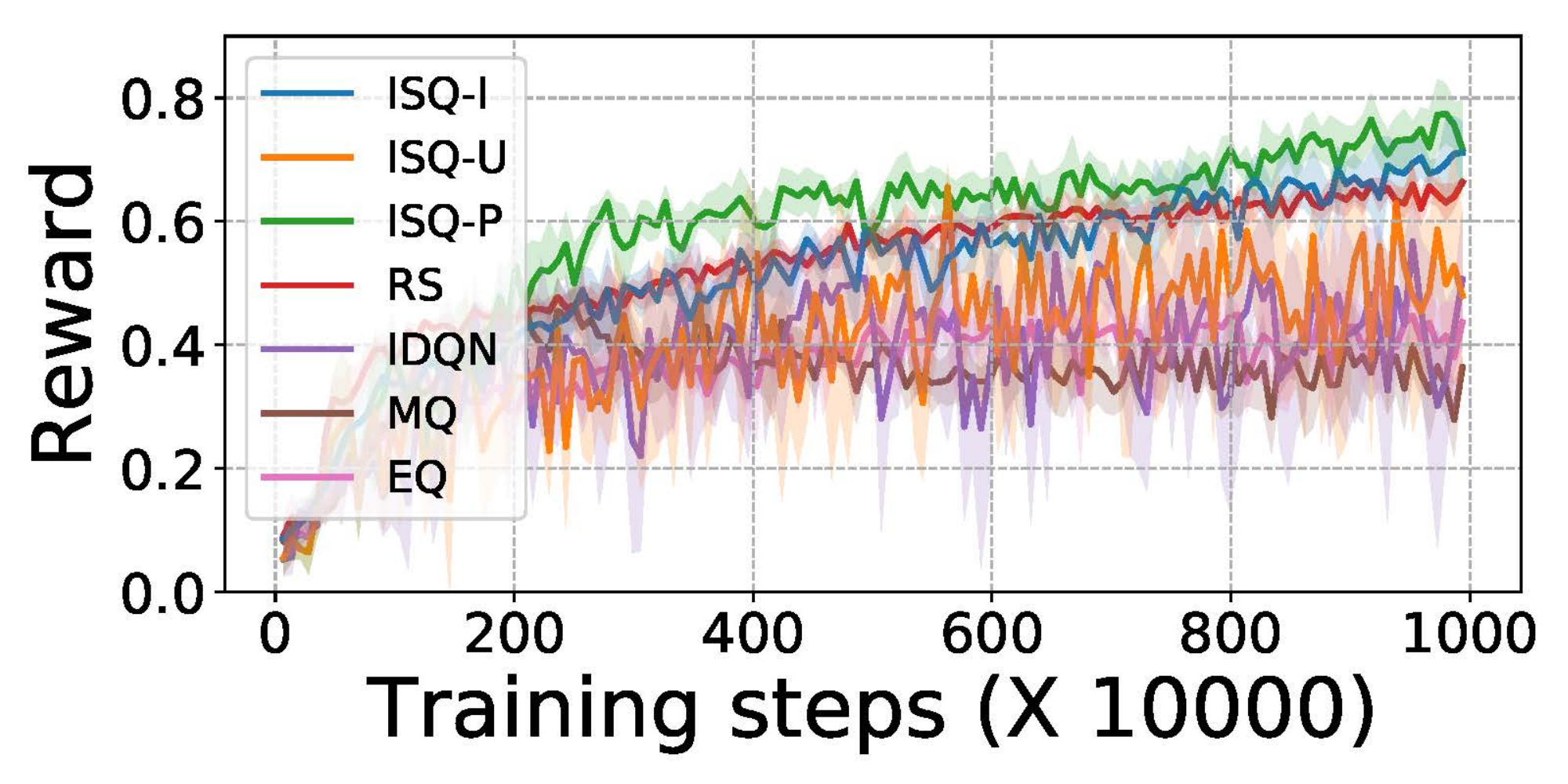}}
  % \vspace{-0.25cm}
  \caption[cross]{Learning curves for the PP task with $10$ predators and $4$ preys.
  % The plot shows \note{measure}.
  Each episode consists of 175 steps. 
  % \note{goes to analysis: No matter what $Q_{0}$ selects, the $Q_{1}$ tries to select proper a predator and a command near the selected predator since a prey should be caught simultaneously by more than two predators.}
  }
  % \vspace{-0.5cm}
  \label{fig:predator-prey}
\end{figure*}

\paragraph{Circle Selection (CS)} \label{sec:exp-CS} 
In Circle Selection (CS) task, there are $N$ selectable and $U$ unselectable circles, where each circle is randomly moving and its radius increases with random noise. The agent observes positions and radius values of all the circles as a state, selects $K$ circles among $N$ selectable ones, and chooses $1$ out of the $5$ commands: moves {\em up, down, left, right}, or {\em stay}. Then, the agent receives a negative or zero reward if the selected circles overlap with unselectable or other selected circles, respectively; otherwise, it can receive a positive reward. The amount of reward is related to a summation of the selected circles' area. All selected circles and any overlapping unselectable circle are replaced by new circles, which are initialized at random locations with small initial radius. Therefore, the agent needs to avoid the overlaps by carefully choosing circles and their commands to move. 

\begin{comment}
In this task, there are $N+U$ circles moving randomly on a two-dimensional Euclidean plane,
where the radius of each circle increases constantly with random noise.
Only $N$ circles are selectable, while the other $U$ circles are unselectable.
The agent first observes the central positions and radiuses of the all circles as a state $\tilde{s}$,
selects $K$ among $N$ selectable circles, and chooses 1 out of the 5 commands for each selected one:
moves {\em up, down, left, right} with a fixed distance or {\em stay}.
% in one of the five directions (commands): {\em up, down, left, right} and {\em stay} - as an action.
Then, the agent receives a reward $\tilde{r}$ which is a summation of values for the selected circles.
The value is given as a {\em negative} value of the circle’s area or {\em zero} if the circle overlaps some of the $U$ unselectable circles (Fig.~\ref{fig:negative}) or some of the other $K-1$ selected circles (Fig.~\ref{fig:zero}) respectively, while it is given as a {\em positive} reward otherwise (Fig.~\ref{fig:positive}).
The agent should avoid the overlaps by carefully choosing the commands for the selected circles.
% \note{example of the three different rewards, in case of one circle selection.}
After that, all selected $K$ circles and any unselectable circle overlapping them
are replaced by new circles, which are initialized at random locations with small initial radius.
% Q: when the episode terminates?
\end{comment}

\myparagraph{Selective Predator-Prey (PP)}
In this task, multiple predators capture randomly moving preys. The agent observes the positions of all the predators and preys, selects $K$ predators, and assigns the commands as in the CS task. Only selected predators can move according to the assigned command and capture the preys. The number of preys caught by the predators is given as a reward, where a prey is caught if and only if more than two predators catch the prey simultaneously.

% \subsection{Tested algorithms and Hyperparameters}
% \smallskip\noindent {\bf Tested algorithms}
\myparagraph{Tested Algorithms and Setup}
% As described in Section~\ref{sec:greedy-select-dqn}, we compare three versions of ISQ with intra-sharing only (ISQ-I), unified-sharing (ISQ-U), progressive-sharing (ISQ-P).
We compare the three variants of ISQ: ISQ-I, ISQ-U, ISQ-P with three DQN-based schemes: (i) a vanilla DQN \cite{mnih2015human}, (ii) a sorting DQN that reduces the state space by sorting the order of items based on a pre-defined rule, and (iii) a myopic DQN which learns to maximize the instantaneous reward for the current step, but follows all other ideas of ISQ. 
% before applying the vanilla DQN.
% This myopic DQN can represent many other RL algorithms which do not consider future state after the combinatorial action \cite{qin2014contextual,wen2015efficient,chen2018contextual,li2016contextual,kool2018attention,dai2017learning}.
We also consider three other baselines motivated by value-based MARL algorithms in \cite{tampuu2017multiagent,usunier2016episodic,chen2018neural}: 
%, which are designed for fully cooperative tasks (i.e., $N=K$):
Independent DQN (IDQN), Random-Select DQN (RSQ), and Element-wise DQN (EQ). In IDQN, each item observes the whole state and has its own Q-function with action space equals to $\cC$. In RSQ, the agent randomly selects items first and chooses commands from their Q-functions. EQ uses only local information to calculate each Q-value. We evaluate the models by averaging rewards with $20$ independent episodes. The shaded area in each plot indicates $95\%$ confidence intervals in $4$ different trials, where all the details of the hyperparameters are provided in ourthe technical report\footnote{https://github.com/selectmdp}.
\subsection{Single Item Selection}
To see the impact of I-sharing, we consider the CS task with $K=1$, $U=1$, and $\cC = \{\text{stay}\}$, and compare ISQ-I with a vanilla DQN and a sorting DQN.
% Note that in this case, ISQ-I, ISQ-U and ISQ-P are exactly the same thing which we denote as ISQ.
%We perform experiment in a simple CS task with a single unselectable circle and a single command i.e., $U = 1$ and $C=1$.
% In this experiment, the number of command is restricted to one to see only the effect of I-sharing. ($C=1$).
% We compare ISQ with the vanilla DQN and the sorting DQN, where we sort the circles in radial order, an essential feature for the task.
% which is same as the vanilla DQN, but the state is sorted based on a pre-defined rule, so the agent does not need to explore the $N!$ equivalent states.
Fig.~\ref{fig:k1n52050} illustrates the learning performance of the algorithms for $N = 5, 20$, and $50$.

\myparagraph{Impact of I-sharing}
% \subsubsection{Impact of intra-parameter sharing}
% To solely demonstrate the impact of I-sharing, we compare with a vanilla DQN (\note{ref}) and a sorting DQN, which reduces a state space by sorting the information of items based on a pre-defined rule before applying a vanilla DQN.
The vanilla DQN performs well when $N=5$, but it fails to learn when $N=20$ and $50$ due to the lack of considering equi-invariance in IS-MDP. Compared to the vanilla DQN, the sorting DQN learns better policies under large $N$ by reducing the state space through sorting. However, ISQ-I still outperforms the sorting DQN when $N$ is large. This result originated from the fact that sorting DQN is affected a lot by the choice of the sorting rule. In contrast, ISQ-I exploits equi-invariance with I-shared Q-network so it can outperform the other baselines for all $N$s especially when $N$ is large.
The result coincides to our mathematical analysis in Theorem~\ref{thm:local optimal} and Theorem~\ref{thm:universal} which guarantee the expressive power of  I-shared Q-network for IS-MDP.

\subsection{Multiple Item Selection}
% In this experiment, we evaluate the 3 ISQ variants and the other baselines in CS and PP tasks under varying $N$ and $K$.
To exploit the symmetry in the tasks, we apply I- sharing to all the baselines.
For CS task, the environment settings are $K=6, |\mathcal{C}| = 5$, $U=0$ and $N=50,200$.
For PP task, we test with $10$ predators ($N=10$) and $4$ preys in a $10 \times 10$ grid world for $K=4,7,10$.
% In Fig.~\ref{fig:K6}, we evaluate the algorithms in CS task with $K=6, |\mathcal{C}| = 5$, $U=0$ and $N=50,200$.
% In Fig.~\ref{fig:predator-prey}, we test the algorithms in PP task with
% $10$ predators ($N=10$) and $4$ preys in a $10 \times 10$ grid world while $K=4,7,10$.
The learning curves in both CS task (Fig.~\ref{fig:K6}) and PP task (Fig.~\ref{fig:predator-prey})
clearly show that ISQ-I outperforms the other baselines (except other ISQ variants) in most of the scenarios even though we modify all the baselines to apply I-sharing.
This demonstrates that ISQ successfully considers the requisites for S-MDP or IS-MDP: a combination of the selected items, command assignment, and future state after the combinatorial selection. 
% In the following parts, we will explain the advantages of ISQ.

\myparagraph{Power of ISQ: Proper Selection}
% \subsubsection{Power of ISQ: Proper Selection}
Though I-shared Q-networks give the ability to handle large $N$ to all the baselines,
ISQs outperform all others in every task.  
This is because only ISQ can handle all the requisites to compute correct Q-values.
% although we apply I-shared Q-networks.
% since the reward dramatically depends on the other selected items and their commands, especially in CS (zero rewards if collisions occur).
% This non-stationary issue is alleviated a lot but still not solved although we apply I-shared Q-networks.
IDQN and RSQ perform poorly in many tasks since they do not smartly select the items.
RSQ performs much worse than ISQ when $K \ll N$ in both tasks since it only focuses on assigning proper commands but not on selecting good items.
Even when $K = N$ (Fig.~\ref{fig:K10N10}), ISQ-I is better than RSQ since RSQ needs to explore all combinations of selection, while ISQ-I only needs to explore specific combinations.
The other baselines show the importance of future prediction, action selection, and full observation.
First, MQ shares the parameters like ISQ-I, but it only considers a reward for the current state.
Their difference in performance shows the gap between considering and not considering future prediction in both tasks.
In addition, ISQ-I (1) only needs to select items but still has lower performance compared to ISQ-I.
This shows that ISQ-I is able to exploit the large action space.
Finally, EQ estimates Q-functions using each item's information.
The performance gap between EQ and ISQ-I shows the effect of considering full observation in calculating Q-values.

\myparagraph{Impact of P-sharing}
% In this part, we check the impact of inter-parameter sharing among different cascaded Q-networks which
% In this part, we analyze the impact of P-sharing among different cascaded Q-networks by comparing the learning curves of 3 types of ISQs: ISQ, ISQ-U, ISQ-P.
% In this part, we compare P-sharing with the other two variants.
% In ISQ-U or ISQ-P, each cascaded Q-network $Q_{k}$
% not only uses experience from its phase $k$ but also
% With ISQ-P, each cascaded Q-network $Q_{k}$ uses experiences from the other phases $k'$ to train.
% Therefore, ISQ-P learns significantly faster than ISQ-I in all cases as well illustrated by the learning curves in Figure \ref{fig:K6} and \ref{fig:predator-prey}.
By sharing the parameters in the beginning, ISQ-P learns significantly faster than ISQ-I
in all cases as illustrated by the learning curves in Fig.~\ref{fig:K6} and \ref{fig:predator-prey}.
ISQ-P also outperforms ISQ-U in the PP task because of the increase in the number of parameters at the end of the training process.
With these advantages, ISQ-P achieves two goals at once: fast training in early stage and good final performances.
\myparagraph{Power of ISQ: Generalization Capability}
% The agent can save tremendous memory from parameter sharing of Q-networks.
%   Table~\ref{table:memory} shows the required memory for the Q-Networks that are actually tested for  U-sharing, I-sharing, P-sharing and vanilla DQNs.
Another advantage of ISQ is powerful generality under environments with different number of items,
which is important in real situations.
% This is very important from a practical point of view since the number of possible candidates is usually varied in real situation.
When the number of items changes, a typical Q-network needs to be trained again.
% Typical Q-network should be trained again when number of items from training and testing are different.
However, ISQ has a fixed number of parameters $\vert \btheta \vert = O(K)$ regardless of $N$.
Therefore, we can re-use the trained $\theta_{k}$ for an item size $N_{tr}$ to re-construct another model for a different item size $N_{te}$.
%  $N_{tr}$ and $N_{te}$ in CS and PP tasks.
% In Table~\ref{table:inductive_CS},
%As shown in Table~\ref{table:inductive_CS},
From the experiments of ISQ-P on different CS scenarios, we observe that for the case $N_{tr}=50, N_{te}=200$, ISQ-P shows an $103\%$ performance compared to $N_{tr}=200, N_{te}=200$.
In contrast, for the case $N_{tr}=200 $ and $N_{te}=50$, it shows an $86\%$ performance compared to $N_{tr}=50$ and $N_{te}=50$.
% which they are not trained for,
%  ($N_{te} \neq N_{tr}$) enough get a similar performances when $N_{tr} = N_{te}$
These are remarkable results since the numbers of the items are fourfold different ($N=50, 200$).
% The performance when $N_{tr}=50, N_{te}=200$ is even better than the performance for $N_{tr}=N_{te}=200$.
We conjecture that ISQ can learn a policy efficiently in an environment with a small number of items and transfer the knowledge to a different and more difficult environment with a large number of items.
\section{Conclusion}
In this paper, we develop a highly efficient and scalable algorithm to solve continual combinatorial selection by converting the original MDP into an equivalent MDP and leveraging two levels of weight sharing for the neural network. We provide mathematical guarantees for the expressive power of the weight shared neural network. Progressive-sharing share additional weight parameters among $K$ cascaded Q-networks. We demonstrate that our design of progressive sharing outperforms other baselines in various large-scale tasks.

% We consider the Select-MDP (S-MDP) which selecting $K$ items with command among $N$ items and $C$ command in continual sequence. 
% In this paper, we convert S-MDP to an equivalent one called Iterative Select-MDP which the agent selects items one by one. 
% We design Iterative Select Q-learning (ISQ) algorithms with $K$ cascaded Q-networks.
% We propse two-levels of weight sharing methods: intra-sharing and progressive-sharing.
% Intra-sharing exploits \textit{equi-invariance} (EI) in IS-MDP  so it can handle large number of $N$s. 

\section*{Acknowledgements}
\thanks{This work was supported by Institute for Information \& communications Technology Planning \& Evaluation(IITP) grant funded by the Korea government(MSIT) (No.2016-0-00160, Versatile Network System Architecture for Multi-dimensional Diversity).} 

\thanks{
This research was supported by Basic Science Research Program through the National Research Foundation of Korea(NRF) funded by the Ministry of  Science and ICT (No. 2016R1A2A2A05921755).}

%\clearpage

% \newpage
\bibliographystyle{named}
\bibliography{ijcai19}
% \bibliography{sdf}
\appendix
\onecolumn
% \section{Supplementary}
\begin{appendices}

\section[cross]{Intra-Parameter Sharing} \label{sec:IntraSharing}
\subsection[cross]{Single Channel} \label{sec:IntraSharing_math_single_channel}
% \begin{figure}
%   \centering
%   \includegraphics[width=0.4\linewidth]{}
%       % \end{minipage}
%     \caption{Simple Intra-Sharing Q-network $Q_{\theta}$ with two layers. The same colored edges in the same layer are tied together.}
%     \label{fig:eilayers}
%   \end{figure} 
In this section, we formally redefine the two types of the previously defined weight shared layers $\phi_{k}(\cdot)$ and $\psi_{k}(\cdot)$ with the EI property, i.e., for all $\sigma_{s}:=(\sigma_{x}, \sigma_{i})\in \bS_{k} \times \bS_{N-k}$, 
\begin{align*}
  \phi_k(\sigma_s(\bx,\bi)) = \sigma_s(\phi_k(\bx,\bi)), \qquad
  \psi_k(\sigma_s(\bx,\bi)) = \sigma_i(\psi_k(\bx,\bi)).
\end{align*}

We start with the simplest case when $\phi_{k}: \Real^{\vert \bx \vert } \times \Real^{\vert \bi \vert} \rightarrow \Real^{\vert \bx \vert} \times \Real^{\vert \bi \vert}$ and $\psi_{k}:\Real^{\vert \bx \vert} \times \Real^{\vert \bi \vert}\rightarrow \Real^{\vert \bi \vert}$.
This case can be regarded as a state $s= (\bx, \bi)$ where $\bx=(x_{1}, \cdots, x_{k}) \in \Real^{k}$ and $\bi=(i_{1}, \cdots, i_{N-k}) \in \Real^{N-k}$ in Section~\ref{sec:isq}.  
% Figure~\ref{fig:eilayers} describes the layers with notations of the nodes in the layers. 
% We will extend for the case when 
Let $\mathbf{I}_x \in \mathbb{R}^{k \times k }$ and $\mathbf{I}_i \in \mathbb{R}^{(N-k) \times (N-k) }$ are the identity matrices. 
% We denote $\mathbf{1}_{l,m} \in \mathbb{R}^{l \times m}$ for any postive integers $l$ and $m$. 
We  denote
$\mathbf{1}_{x,i} \in \mathbb{R}^{k \times (N-k)}$, $\mathbf{1}_{i,x} \in \mathbb{R}^{(N-k) \times k}$, $\mathbf{1}_{x,1} \in \mathbb{R}^{k \times 1}$, 
and $\mathbf{1}_{i,1} \in \mathbb{R}^{(N-k) \times 1}$
are the matrices of ones. 
% Let $\mathbf{I}_{x}:= \mathbf{I}_{\vert \bx_{k} \vert}$, $\mathbf{I}_{y}:= \mathbf{I}_{\vert \by_{k} \vert}$ and $\mathbf{1}_{x,y}:= \mathbf{1}_{\vert \bx_{k} \vert, \vert \by_{k} \vert}$ for simple notations which implicitly includes $k$ and $N$.

\myparagraph {Layer $\phi_{k}$}
Let $\phi_{k}(\bx,\bi):=(\bX, \bI)$ with $\bx, \bX \in \Real ^{k}$ and $ \bi,  \bI \in \Real ^{N-k}$ where the output of the layers $\bX $ and $ \bI$ are defined as  
\begin{align} \label{eq:phi_A_output_XY}
  \tbX:= \rho( \bW_{\tx}\tbx + \bW_{\tx, \tx}\tbx +\bW_{\tx, i}\bi + \bb_{\tx}), \qquad
  \bI:= \rho( \bW_{i}\bi + \bW_{i, i}\bi +\bW_{i, \tx}\tbx + \bb_{i} )
\end{align}
with a non-linear activation function $\rho$.
The parameter shared matrices $\bW_{x}, \cdots, \bW_{i,i}$ defined as follows:
\begin{align*}
   \bW_{\tx} & := W_{\tx} \mathbf{I}_{x} , &   \bW_{\tx, \tx} & :=\frac{W_{\tx, \tx}}{\vert \bx \vert} \mathbf{1}_{x,x}, &   \bW_{\tx, i} & :=\frac{W_{\tx, i}}{\vert \bi \vert} \mathbf{1}_{x, i}, &   \bb_{\tx}&:= b_{\tx} \mathbf{1}_{x, 1}, \\
   \bW_{i}&:=W_{i} \mathbf{I}_{i}, &   \bW_{i,i}&:=\frac{W_{i,i}}{\vert \bi \vert} \mathbf{1}_{i, i} ,&   \bW_{i, \tx} & :=\frac{W_{i, \tx}}{\vert \bx \vert} \mathbf{1}_{i, x}, &   \bb_{i} & :=b_{i} \mathbf{1}_{i, 1}.
\end{align*}
The entries in the weight matrices $\bW_{\tx}, \cdots, \bb_{i}$ are tied by real-value parameters
$W_{\tx}, \cdots, b_{i} \in \Real$, respectively.
Some weight matrices such as $\bW_{\tx,\tx}, \bW_{\tx,i}, \bW_{i,\tx}, \bW_{i,i}$ have normalizing term $1 / {\vert \bx \vert}$ ($=1/k$) or $1 / \vert \bi \vert$ ($=1/ {(N-k)}$).
In our empirical simulation results, these normalizations help the stable training as well as increase the generalization capability of the Q-networks.

\myparagraph{\bf Layer $\psi_{k}$}
% The layer $\psi_{k}$ has similar architecture to $\phi_{k}$.
The only difference of $\psi_{k}$ from $\phi_{k}$ is that the range of $\psi_{k}(\tbx, \bi)$ is restricted in $\bI$ of (\ref{eq:phi_A_output_XY}), i.e.,  $\psi_{k}(\tbx,\bi) :=\bI \in \Real ^{N-k} $ where $ \bI = \rho( \bW_{x}\tbx + \bW_{x,x}\tbx +\bW_{x,i}\bi  + \bb_{i}). $
The weight matrices are similarly defined as in the $\phi_{k}$ case:
\begin{align*}
  \bW_{i}:=W_{i} \mathbf{I}_{i},\quad \bW_{i,i}:= \frac{W_{i,i}}{\vert \bi \vert} \mathbf{1}_{i,i} ,\quad \bW_{i,\tx}:=\frac{W_{i,\tx}}{\vert \bx \vert} \mathbf{1}_{i,x}, \quad \bb_{i}:=b_{i} \mathbf{1}_{i, 1}.
\end{align*}

\myparagraph{Deep Neural Network with Stacked Layers} 
Recall that the I-shared network $Q_{\theta}(\cdot \, ; \, \theta_{k})$ is formed as follows:
 $$Q_{\theta}(\cdot \, ; \, \theta_{k}):=\psi_{k} \circ \phi_{k}^{D} \circ \cdots \phi_{k}^{1}(\cdot)$$ where $D$ denotes the number of the stacked mutiple layers belonging to $\phi_{k}$.
Therefore, the weight parameter vector $\theta_{k}$ for  $Q_{\theta}(\cdot \, ; \, \theta_{k})$ consists of $\{ W_{\tx}^{d}, \cdots, b_{i}^{d} \}_{d=1}^{d=D}$ for $\phi_{k}$ and $\{W_{i}, W_{i,i}, W_{i,x}, b_{i}\}$ for $\psi_{k}$. 
In contrast, the projected vector $\omega(\theta_{k})$ consists of high dimenional weight parameter vectors such as  $\{ \bW_{\tx}^{d}, \cdots, \bb_{i}^{d} \}_{d=1}^{d=D}$ for $\phi_{k}$ and $\{\bW_{i}, \bW_{i,i}, \bW_{i,x}, \bb_{i}\}$ for $\psi_{k}$. 
\subsection[cross]{Multiple Channels}
\label{sec:multiplechannels}
\noindent {\bf Multiple Channels.}
In the above section, we describe   simplified versions of the intra-sharing layers $$\phi_{k}: \Real^{\vert \bx \vert} \times \Real^{\vert \bi \vert} \rightarrow \Real^{\vert \bx \vert} \times \Real^{\vert \bi \vert}, \qquad \psi_{k}: \Real^{\vert \bx \vert } \times \Real^{\vert \bi \vert}\rightarrow \Real ^{\vert \bi \vert}.$$
In this section, we extend this to
\begin{equation} \label{eq:EILayers}
  \phi_{k}: \Real^{\vert \bx \vert \cdot P_{x} + \vert \bi \vert \cdot P_{i}} \rightarrow \Real^{\vert \bx \vert \cdot O_{x} + \vert \bi \vert \cdot O_{i}}, \qquad \psi_{k}: \Real^{\vert \bx \vert \cdot P_{x} + \vert \bi \vert \cdot P_{i}} \rightarrow \Real^{\vert \bi \vert \cdot O_{i}}
\end{equation}
where $P_{x}, P_{i}, O_{x}, O_{i}$ are the numbers of the features for the input $\bx, \bi$ and the output $X,I$ of each layer, respectively. 
The role of the numbers is similar to that of channels in convolutional neural networks which increase the expressive power and handle the multiple feature vectors.
% we can consider multiple input channels $P$ and output channels $O$.
This wideness allows more expressive power due to the increased numbers of the hidden nodes, according to the \textit{universial approximatin theorem} \cite{gybenko1989approximation}.
Furthermore, our Theorem~\ref{thm:universal} also holds with proper feature numbers in the hidden layers. 
Without loss of generality, we handle the case for $P_{x} = P_{y} = P$ and $O_{x} = O_{y} = O$.
We use superscripts $\bx^{\langle p \rangle}, \by^{\langle p \rangle} $ and $\bX^{\langle o \rangle}, \bY^{\langle o \rangle}$ for $p \in \{1, \cdots, P\}$ and $o \in \{1, \cdots, O\}$ to denote such channels.
Our architecture satisfies that cross-channel interactions are fully connected.
Layer $\phi_{k}(\tbx, \bi)$ with multiple channels is as follows:
\begin{align*}
  & \tbX^{\langle o \rangle}:= \rho\left( \sum_{p=1}^{P} \left(  \bW^{\langle o, p \rangle}_{\tx}\tbx^{\langle p \rangle} + \bW^{\langle o, p \rangle}_{\tx,  \tx}\tbx^{\langle p \rangle} +\bW^{\langle o, p \rangle}_{\tx, i}\bi^{\langle p \rangle} + \bb^{\langle o \rangle}_{\tx} \right) \right), \\
  & \bI^{\langle o \rangle}:= \rho\left(\sum_{p=1}^{P} \left( \bW^{\langle o, p \rangle}_{i}\bi^{\langle p \rangle} + \bW^{\langle o, p \rangle}_{i, i}\bi^{\langle p \rangle} +\bW^{\langle o, p \rangle}_{i,  \tx}\tbx^{\langle p \rangle} + \bb^{\langle o \rangle}_{i} \right) \right)
\end{align*}
where
\begin{align*}
   \bW^{\langle o,p \rangle}_{\tx}&:=W^{\langle o,p \rangle}_{\tx} \mathbf{I}_{x}, & \quad \bW^{\langle o,p \rangle}_{\tx,\tx}&:=\frac{W^{\langle o,p \rangle}_{\tx,\tx}}{\vert \tbx \vert} \mathbf{1}_{x,x}, & \quad \bW^{\langle o,p \rangle}_{\tx, i}&:=\frac{W^{\langle o,p \rangle}_{\tx, i}}{\vert \bi \vert} \mathbf{1}_{x,i}, & \quad \bb^{\langle o \rangle}_{\tx}&:= b^{\langle o \rangle}_{\tx} \mathbf{1}_{x, 1}, \\
  \bW^{\langle o,p \rangle}_{i}&:=W^{\langle o,p \rangle}_{i} \mathbf{I}_{i}, & \quad \bW^{\langle o,p \rangle}_{i,i} & :=\frac{W^{\langle o,p \rangle}_{i,i}}{\vert \bi \vert} \mathbf{1}_{i, i}, & \quad \bW^{\langle o,p \rangle}_{i, \tx} & :=\frac{W^{\langle o,p \rangle}_{i, \tx}}{\vert \tbx \vert} \mathbf{1}_{i,x}, & \quad \bb^{\langle o \rangle}_{i} & :=b^{\langle o \rangle}_{i} \mathbf{1}_{i, 1}.
 \end{align*}
 Similar to the above cases, the entries in the weight matrices  $\bW^{\langle o,p \rangle}_{\tx}, \cdots, \bb^{\langle o \rangle}_{i}$ are tied together by real-value parameters $W^{\langle o,p \rangle}_{\tx}, \cdots, b^{\langle o \rangle}_{i}$ respectively.
 The weight parameter vector $\theta_{k}$ for  $Q_{\theta}(\cdot \, ; \, \theta_{k})$ with multiple channels consists of $\{ W_{\tx}^{d}, \cdots, b_{i}^{d} \}_{d=1}^{d=D}$ for $\phi_{k}$ and $\{W_{i}, W_{i,i}, W_{i,x}, b_{i}\}$ for $\psi_{k}$. 
 In contrast, the projected vector $\omega(\theta_{k})$ consists of high dimenional weight parameter vectors such as  $\{ \bW_{\tx}^{d}, \cdots, \bb_{i}^{d} \}_{d=1}^{d=D}$ for $\phi_{k}$ and $\{\bW_{i}, \bW_{i,i}, \bW_{i,x}, \bb_{i}\}$ for $\psi_{k}$.
\smallskip

\section{Proofs of the theorems}
\subsection[cross]{Relative Local optimality: Theorem~\ref{thm:local optimal}}
\label{proof:local-optimal}
% The below theorem is a 
% \begin{theorem}[Relative Local Optimality] 
%   If $\theta^{\star} \in \Theta$ is a local optimal parameter vector of the loss surface $L_{\Theta}(\theta)$, then the projected parameter $\omega(\theta^{\star}) \in \Omega$ is also the local optimal point of $L_{\Omega}(\omega)$.
% \end{theorem}
To simplify the explanation, we only consider the case when phase $k=0$ so $s=(\bi)=(i_{1}, \cdots, i_{N})$ and $Q^{\star}$ is permutation equivariant to the order of $\bi$. 
Furthermore, we consider the case of a single channel described in Section~\ref{sec:IntraSharing_math_single_channel}. 
Therefore, we omit to notate $k$ in this subsection and denote $\sigma$ rather than $\sigma_{i} \in \bS_{N}$.
However, our idea for the proof can be easily adapted to extended cases such as $k>0$ or multiple channels.
To summarize, our goal is to show relative local optimality in Theorem~\ref{thm:local optimal} where the loss function $L_{\Omega}$ is defined as 
$$L_{\Omega}(\omega):=\sum_{\sigma \in \bS_{N}} \sum_{\bi \in B} \Big| Q_{\omega}(\sigma(\bi))-Q^{\star}(\sigma(\bi)) \Big|. $$ 
\myparagraph{Skectch of Proof} 
      To use contradiction, we assume that there exists at least one local minima $\theta^{\star} \in  \Theta$ in the loss function $L_{\Theta}$ for I-shared network $Q_{\theta}$
      % which defined by the parameter shared neural network architecture $H_{\overline{\theta}}$ in the figure
      while $ \omega({\theta}^{\star}) \in \Omega $ is not a local minima in the loss function $L_{\Omega}$ for non-weight shared network $Q_{\omega}$.
      Therefore,
      there must be a vector $ \omega_{0} \in \Omega$ in  $\Omega$ which makes the directional derivative $D_{ \omega_{0}}L_{\Omega} (\omega(\theta^{\star}))<0$.
      % of the loss surface for $G_{\theta}$ along $ \omega$ from $\overline{\theta}^{\star} \in \Theta$. In other words,  $ D_{ \omega}(G_{\overline{\theta}^{\star}}) < 0.  $
    We first extend the definition of each $\sigma\in \bS_{N}$ to the corresponding mapping $\sigma:\Omega \rightarrow \Omega$.
    % Since the projected weight parameter vector $\omega(\theta^{\star})$ has a symmetry among the weights from the weight sharing, 
    We can generate $N!$ more derivative vector $\sigma(\omega_{0})$ for each $\sigma$ such that $D_{ \sigma(\omega_{0})}L_{\Omega}(\omega(\theta^{\star})) = D_{\omega_{0}}L_{\Omega}(\omega(\theta^{\star}))<0$.
    % i.e.,
    %   \begin{equation}
    %     \label{eq:DpiveqDv}
    %   D_{\Sigma( \omega)} (L_{\Omega}(s;\Sigma(\omega^{\star})))  = D_{ \omega} (L_{\Omega}(s;\omega^{\star})).
    %   \end{equation} 
    Therefore, the sum of the whole permuted vectors $ \overline{\omega}:=\sum_{\sigma \in \bS_{N}}  \sigma(\omega_{0}) $ is also a negative derivative vector while belongs to $\omega(\Theta)$ since $ \overline{\omega}$ has the effect of I-sharing from the summation of the all permuted derivative vectors. 
    This fact guarantees the existence of a derivative vector $ \ovt \in \Theta$ such that $\overline{\omega} = \omega(\ovt)$ and 
    $ D_{\ovt} L_{\Theta}(\theta)<0$ and contradicts to the aformentioned assumption that  $\theta^{\star}$ is the local optimal minima of $L_{\Theta}$. 

\myparagraph{Extended Definition for $\sigma \in \bS_{N}$}
\begin{figure}
\centering
  \subfloat[ $Q_{\omega}$]{\includegraphics[width=0.32\linewidth]{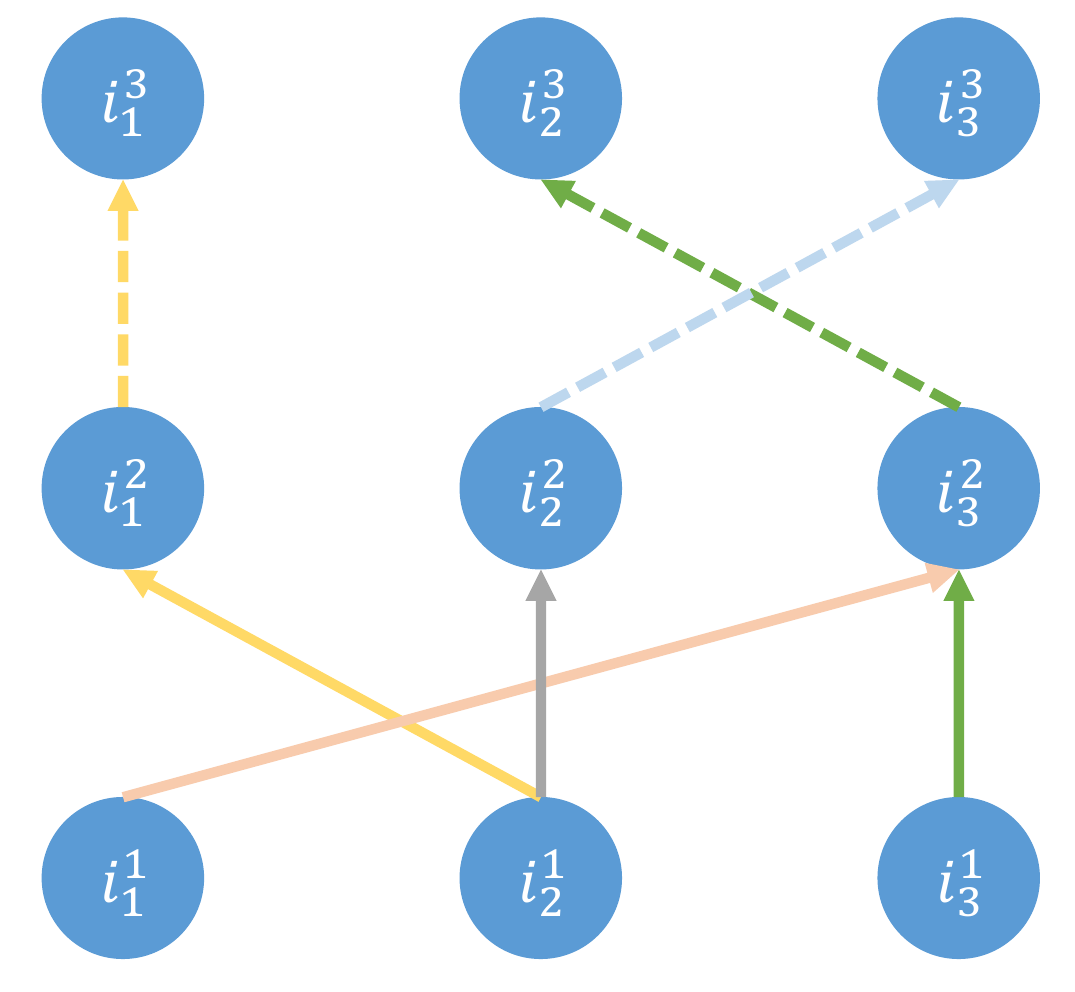}}
  \hspace{2cm}
  \subfloat[ $Q_{\sigma(\omega)}$]{\includegraphics[width=0.32\linewidth]{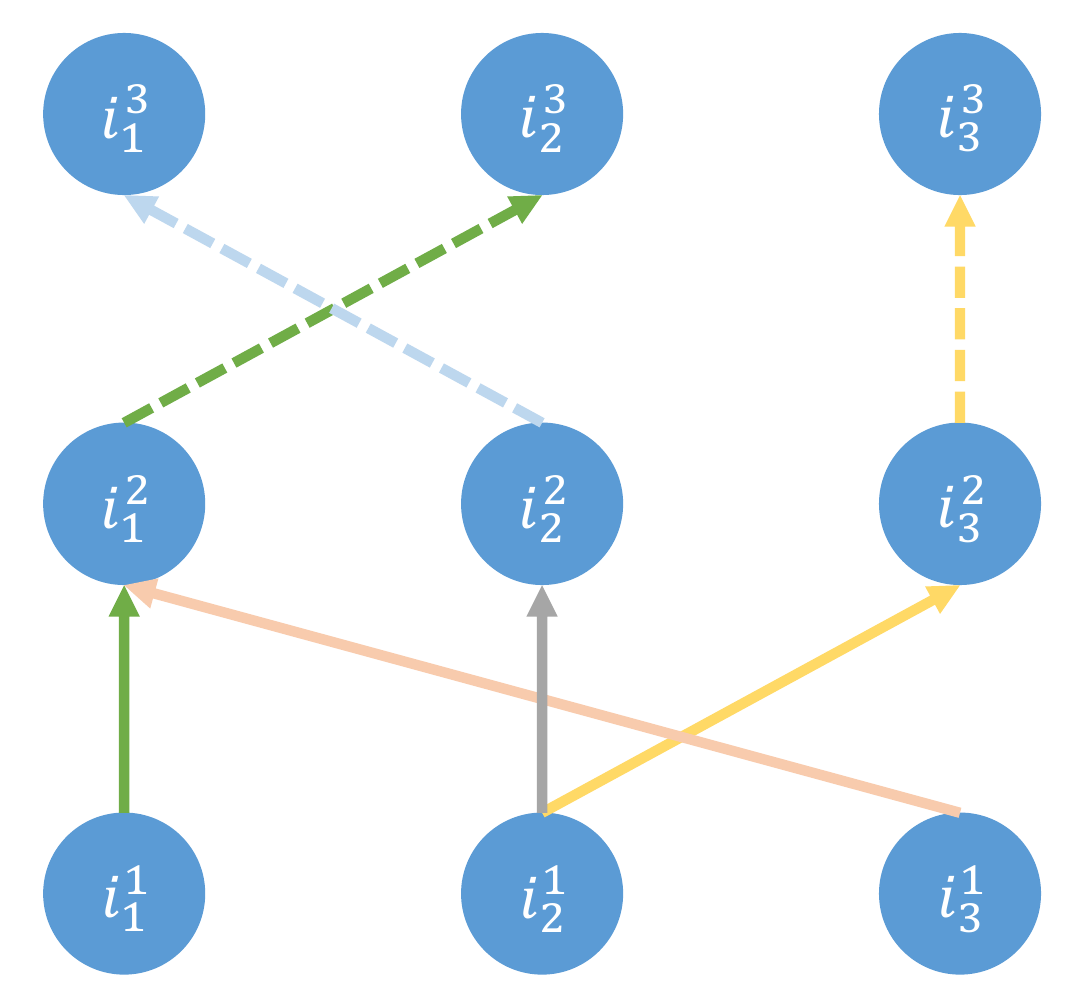}}
  \caption[cross]{The example networks with permuted weight parameter vectors by  $\sigma \in \bS_{3}$ where $\sigma(1) = 3,\sigma(2) = 2, \sigma(3)=1$. 
  If weights in the different network have the same color then they also share the same weight values.}
  \label{fig:Q_permute}
\end{figure}
In this paragraph, we will extend the concept of the permutation $\sigma \in \bS_{N}$ from the original definition on the set $\{1,2,\cdots,N\}$ to the permutation on the weight parameter vector $\omega$ in non-shared weight parameter vector space $\Omega$, i.e.,
$\sigma:\Omega \rightarrow \Omega$
to satisfy the below statement,  
\begin{equation}  \label{eq:Q_permute}
  \forall \sigma\in \bS_{N}, \,  \forall \omega \in \Omega, \,  \forall \bi \in \Real^{N},  \quad \sigma(Q_{\omega}(\bi)) = Q_{\sigma(\omega)}(\sigma(\bi)). 
\end{equation}
To define the permutation with the property in \eqref{eq:Q_permute}, we shall describe how $\sigma$ permutes weight parameters in a layer $\phi_{\omega}:\Real^{N}\rightarrow \Real^{N}$ in $Q_{\omega}$, which can be represented as 
\begin{equation} \label{eq:phi_d_permute}
\phi_{\omega}(\bi) = \bW\bi +\bb 
\end{equation}
where $\bW \in R^{N\times N} $ is a weight matrix and $\bb \in \Real^{N}$ is a biased vector. 
In the permuted layer $\phi_{\sigma(\omega)}$, the weight matrix $\bW$ and $\bb$ in (\ref{eq:phi_d_permute}) convert to  $M_{\sigma} \circ \bW \circ M_{\sigma}^{-1}$ and  $M_{\sigma}\circ \bb$, respectively.
$M_{\sigma} $ is a permutation matrix defined as 
$ M_{\sigma}:=[\be_{\sigma(1)}, \cdots, \be_{\sigma(N)}] $
where $\be_{n}$ is a standard dimensional basis vector in $\Real^{N}$.
With the permuted weights, we can easily see $\sigma(\phi_{\omega}(\bi) ) = \phi_{\sigma(\omega)}(\sigma(\bi))$  for all $\sigma, \omega,$ and $ \bi$.
Therefore, the network $Q_{\sigma(\omega)}$ which is a composite of the layers $\phi_{\sigma(\omega)}$s satisfies \eqref{eq:Q_permute}.
Figure~\ref{fig:Q_permute} describes an example of the permutation on $\omega$. 

Note that the projected weight parameter vector $\omega(\theta)$ for an arbitrary $\theta \in \Theta$ is invariant to the permutation $\sigma:\Omega \rightarrow \Omega$ since $\omega(\theta)$ satisfies the symmetry among the weights from I-sharing, i.e.,
\begin{equation} \label{eq:omega projected}
  \forall \theta \in \Theta, \forall \sigma \in \bS_{N}, \quad  \omega(\theta) = \sigma(\omega(\theta)). 
\end{equation}
  \begin{lemma}[Permutation Invariant Loss Function] \label{thm:Permutation Invariant Loss Function} 
    For any weight parameter vectors $\omega\in \Omega$, $\theta \in \Theta$, and $\sigma\in \bS_{N}$, the below equation holds.   
    \begin{equation} \label{eq:Lomegahomo}
      % \label{eq:LpiveqLv}
      % \forall \sigma_{0} \in \bS_{N}, \, \forall \omega \in \Omega,\, \forall \theta \in \Theta, \quad L_{\Omega}(\omega(\theta)+\omega) = L_{\Omega}(\omega(\theta)+\sigma_{0}(\omega)).
 L_{\Omega}(\omega(\theta)+\omega) = L_{\Omega}(\omega(\theta)+\sigma(\omega)).
    \end{equation}

\begin{proof}[(Proof of Lemma~\ref{thm:Permutation Invariant Loss Function})]
We can derive the result of Lemma~\ref{thm:Permutation Invariant Loss Function} from the below statement. 
  \begin{equation*}
    \begin{split}
      L_{\Omega}(\omega(\theta)+\sigma_{0}(\omega)) & = 
      \sum_{\sigma \in \bS_{N} }  \sum_{\bi \in B} \Big|  Q_{\omega(\theta)+\sigma_{0}(\omega)}(\sigma(\bi)) - Q^{\star}(\sigma(\bi)) \Big|^2  \\
      & =   \sum_{\sigma \in \bS_{N} }  \sum_{\bi \in B} \Big|  Q_{\sigma_{0}(\omega(\theta)+\omega)}(\sigma_{0} \circ \sigma_{0}^{-1} \circ \sigma(\bi)) - Q^{\star}(\sigma_{0} \circ \sigma_{0} ^{-1} \circ \sigma(\bi)) \Big|^2   \qquad\quad (\because(\ref{eq:omega projected})) 
      \\
      &= 
      \sum_{\sigma \in \bS_{N} }  \sum_{\bi \in B} \Big| \sigma_{0}\Big(Q_{\omega(\theta)+\omega}(\sigma_{0}^{-1} \circ \sigma(\bi))\Big) - \sigma_{0}\Big(Q^{\star}(\sigma_{0}^{-1} \circ \sigma(\bi))\Big) \Big|^2   \qquad\qquad (\because  \eqref{eq:equi-inv property}, \eqref{eq:Q_permute}) 
      \\
      &= 
      \sum_{\sigma' \in \bS_{N} }  \sum_{\bi \in B} \Big| Q_{\omega(\theta)+\omega}(\sigma'(\bi)) -  Q^{\star}(\sigma'(\bi)) \Big|^2  \qquad\qquad\qquad\qquad\qquad\qquad\quad (\sigma':=\sigma_{0}^{-1} \circ \sigma) 
      \\
      & = L_{\Omega}(\omega(\theta)+\omega).
    \end{split}
  \end{equation*}
\end{proof}
\end{lemma}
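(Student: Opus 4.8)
The plan is to show that replacing $\omega$ by $\sigma(\omega)$ in the argument $\omega(\theta)+\omega$ merely reshuffles the terms of the symmetrized sum defining $L_{\Omega}$, so that its value is unchanged. Fix $\sigma_0 \in \bS_N$ and write $\sigma_0$ in place of the generic $\sigma$ of the statement. First I would record that the weight-permutation action $\sigma_0:\Omega\to\Omega$ is linear: on each layer it sends the weight matrix $\bW\mapsto M_{\sigma_0}\bW M_{\sigma_0}^{-1}$ and the bias $\bb\mapsto M_{\sigma_0}\bb$, both of which are linear in the entries, hence additive. Combining additivity with the invariance \eqref{eq:omega projected} of the projected vector, namely $\sigma_0(\omega(\theta))=\omega(\theta)$, yields the key factorization
$$\omega(\theta)+\sigma_0(\omega)=\sigma_0\bigl(\omega(\theta)\bigr)+\sigma_0(\omega)=\sigma_0\bigl(\omega(\theta)+\omega\bigr).$$

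Next I would substitute this factorization into the definition of $L_{\Omega}$ and push the permutation $\sigma_0$ through both the network and the target. Writing the argument of each function as $\sigma_0\bigl(\sigma_0^{-1}\sigma(\bi)\bigr)$, the compatibility relation \eqref{eq:Q_permute} applied to $Q_{\omega(\theta)+\omega}$ and the equi-invariance \eqref{eq:equi-inv property} applied to $Q^{\star}$ turn each summand into
$$\bigl|\sigma_0\bigl(Q_{\omega(\theta)+\omega}(\sigma_0^{-1}\sigma(\bi))\bigr)-\sigma_0\bigl(Q^{\star}(\sigma_0^{-1}\sigma(\bi))\bigr)\bigr|^2.$$
Since $\sigma_0$ acts on the output space as the permutation matrix $M_{\sigma_0}$, which is orthogonal, it preserves the Euclidean norm, so the leading $\sigma_0$ can be discarded without changing the value of the summand.

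Finally I would reindex the outer sum by $\sigma':=\sigma_0^{-1}\circ\sigma$. Because $\bS_N$ is a group, left-multiplication by $\sigma_0^{-1}$ is a bijection of $\bS_N$ onto itself, so as $\sigma$ ranges over $\bS_N$ so does $\sigma'$; the sum therefore collapses back to $\sum_{\sigma'\in\bS_N}\sum_{\bi\in B}|Q_{\omega(\theta)+\omega}(\sigma'(\bi))-Q^{\star}(\sigma'(\bi))|^2=L_{\Omega}(\omega(\theta)+\omega)$, which is exactly the claimed identity.

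The three ingredients that make this go through are the permutation-invariance \eqref{eq:omega projected} of $\omega(\theta)$, without which the factorization in the first paragraph fails; the compatibility relation \eqref{eq:Q_permute} that trades an input permutation for a weight permutation; and, most crucially, the fact that $L_{\Omega}$ is already symmetrized over the entire group $\bS_N$, which is precisely what licenses the final reindexing. I expect the main obstacle to be keeping the three nested permutations — acting on inputs, on weights, and on outputs — consistently aligned through the substitution; the additivity of the weight action and the norm-preservation of $M_{\sigma_0}$ are routine once stated.
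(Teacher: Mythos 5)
Your proposal is correct and follows essentially the same route as the paper's proof: factor $\omega(\theta)+\sigma_0(\omega)=\sigma_0(\omega(\theta)+\omega)$ via \eqref{eq:omega projected}, push $\sigma_0$ through $Q$ and $Q^{\star}$ using \eqref{eq:Q_permute} and \eqref{eq:equi-inv property}, and reindex the symmetrized sum by $\sigma'=\sigma_0^{-1}\circ\sigma$. You in fact make explicit two points the paper leaves implicit --- the additivity of the weight-permutation action needed for the factorization, and the norm-preservation of $M_{\sigma_0}$ needed to discard the outer $\sigma_0$ --- which strengthens rather than changes the argument.
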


  \begin{proof}[(Proof of Theorem~\ref{thm:local optimal}).]
  We use contradiction by assumping that there exists a local minima $\theta^{\star} \in  \Theta$ of $L_{\Theta}$  while $ \omega({\theta}^{\star}) \in \Omega $ is not a local minima of $L_{\Omega}$.
  Since $\omega({\theta}^{\star}) $ is not local minima of $L_{\Omega}$, there exists a vector $\omega_{0} \in \Omega$ such that the directional derivative of $L_{\Omega}(\omega(\theta^{\star}))$ along $\omega_{0}$ is negative, i.e., 
  $ D_{\omega_{0}}(L_{\Omega}(\omega(\theta^{\star}))) < 0$.
We can find $N!$ additional vectors which have a negative derivative by permuting the $\omega_{0} \in \bS_{N}$ and exploiting the result of Lemma~\ref{thm:Permutation Invariant Loss Function}.
  \begin{equation*}
    \begin{split}
      D_{\sigma(\omega_{0})}(L_{\Omega}(\omega(\theta^{\star}))) & = \lim _{h \rightarrow 0} \frac{L_{\Omega}(\omega(\theta^{\star}))+h \sigma(\omega_{0})) - L_{\Omega}(\omega(\theta^{\star}))}{h} \\
      & = \lim _{h \rightarrow 0} \frac{L_{\Omega}(\omega(\theta^{\star})+h\omega_{0}) - L_{\Omega}(\omega(\theta^{\star}))}{h} \qquad \qquad \qquad (\because \eqref{eq:Lomegahomo}) \\
      & = D_{\omega_{0}}(L_{\Omega}(\omega(\theta^{\star})))<0.
    \end{split}
  \end{equation*}
  The existence of the above limit can be induced from  the differentiability of the activation function $\rho$.
  Furthermore,
the activation function is continuously differentiable,
so if we set  $\overline{\omega}:=\sum_{\sigma \in \bS_{N}} \sigma(\omega_{0})$,
$$D_{\overline{\omega}}(L_{\Omega}(\omega(\theta^{\star})) ) = \sum_{\sigma \in \bS_{N}} D_{\sigma(\omega_{0})}(L_{\Omega}(\omega(\theta^{\star}))) < 0. $$
  From the symmetricity of $\overline{\omega}$ due to the summation of the $N!$ permuted vectors,
  there exists a vector $\ovt \in \Theta $ such that
    $\overline{\omega} =\omega(\overline{\theta})$.
  Thus, $D_{\ovt}(L_{\Omega}(\omega(\theta^{\star}))) =D_{\overline{\omega}}(L_{\Omega}(\omega(\theta^{\star})) ) <0$ which contradicts to the assumption that $\theta^{\star}$ is the local minima on the loss function $L_{\Omega}$. 
  \end{proof}

  \subsection[cross]{Proof of Theorem~\ref{thm:universal}} \label{sec:thm_univ}
  \begin{figure}
    \includegraphics[width=0.5\linewidth]{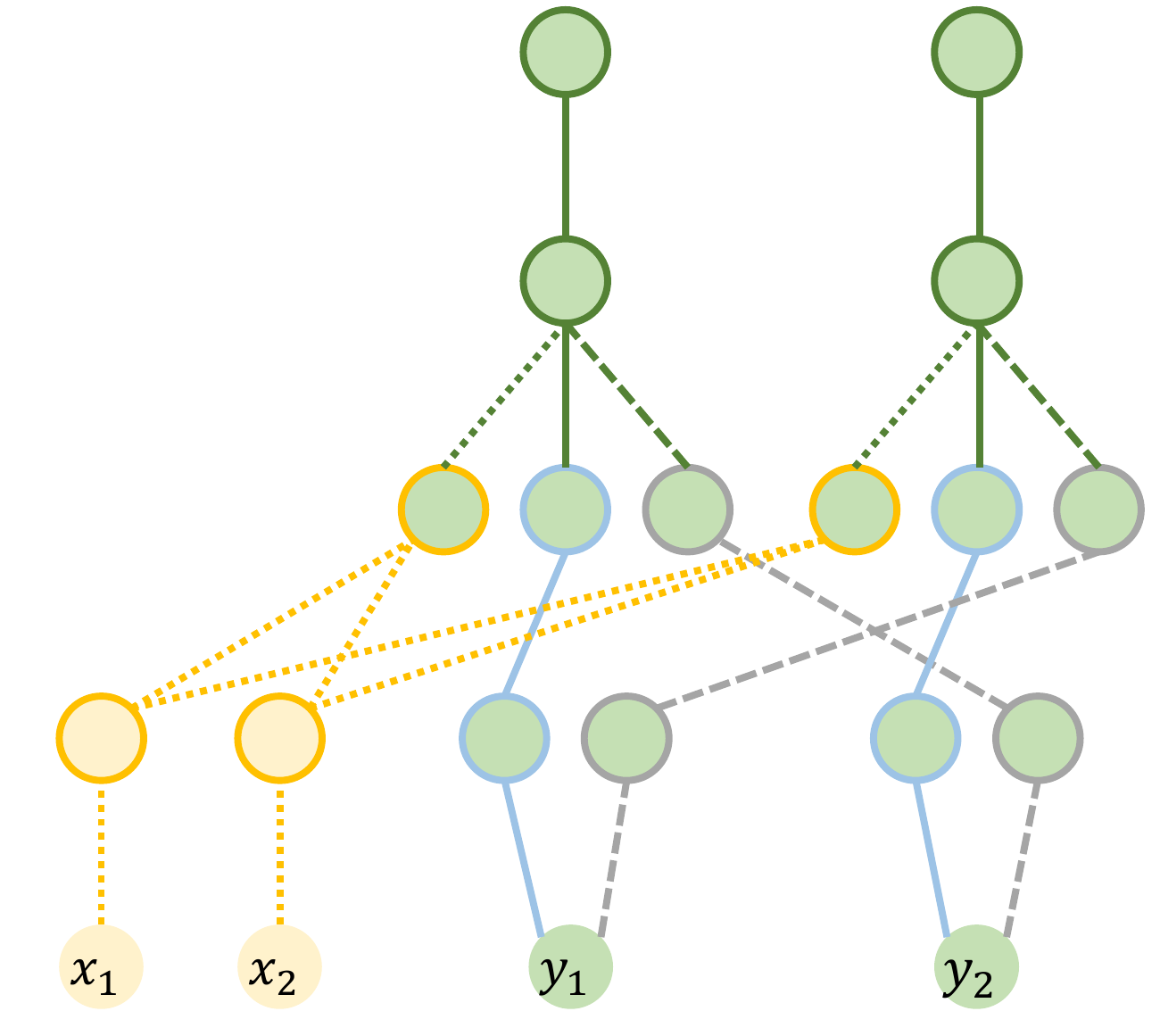}
    \centering
    \caption[cross]{A simplified version of I-shared Q-network $Q_{\theta}(\bx,\bi)$ when $N=4$ and $k=2$ to approximate $Q^{\star}(\bx,\bi)= 
    H (\sum_{x \in \bx} \xi_{x}(x), i_{j}, \sum_{i \in \bi_{-}} \xi_{i}(i))$.
    If the edges share the same color and shape in the same layer, the corresponding weight parameters are tied together. 
    % Each node represents a multiple channels vector.
    % The same colored edges represent the functions in Lemma~\ref{thm:universal}: $H, \sum_{x \in \bx} \xi_{x}(x), i_{j}, \sum_{i \in \bi_{-}} \xi_{i}(i).$
    % The apricot colored edges represent an approximation of $\xi_{x}$. 
    % is an approximation of $H(\sum_{x \in \bx} \xi_{x}(x), i_{j}, \sum_{i \in \bi_{-}} \xi_{i}(i))$.
    The yellow dotted lines represent a mapping to approximate  $\xi_{x}(x)$.
    The blued solid lines represent an identity mapping.
    The grey dashed lines represent $\xi_{i}(i)$.
    Finally, the green edges generate a mapping to approximate $H$.
    }
    \label{fig:universe}
    \end{figure}

\myparagraph{Sketch of Proof}
We denote $\cX:=\cI \times \cC$ as the domain of the information of the selected items $\bx$.
Recall I-shared Q-network $Q_{\theta}(\bx, \bi):\cX^{k} \times \cI^{N-k} \rightarrow \Real^{(N-k) \times C}$ and the optimal Q-function $Q^{\star}(\bx, \bi):\cX^{k} \times \cI^{N-k} \rightarrow \Real^{(N-k) \times C}$ for each phase $k$ share the same input and output domain.
We denote $[Q_{\theta}(\bx,\bi)]_{j} \in \Real^{C}$ and $[Q^{\star}(\bx,\bi)]_{j}\in \Real^{C}$ as the $j$th row of output of $Q_{\theta}(\bx,\bi)$ and $Q^{\star}(\bx,\bi)$ respectively for $1 \leq j \leq N-k$. 
In other words,  
$$
Q_{\theta}(\bx,\bi)=
\begin{bmatrix}
  [Q_{\theta}  (\bx,\bi)]_{1} \\
   \cdots \\
  [Q_{\theta}  (\bx,\bi)]_{N-k}
\end{bmatrix}.
$$ 

In this proof, we will show that each $[Q^{\star}(\bx, \bi)]_{j}$ can be approximated by $[Q_{\theta}(\bx,\bi)]_{j}$. 
From the EI property of $Q^{\star}(\bx,\bi)$, the $j$th row $[Q^{\star}(\bx,\bi)]_{j}: \cX ^{k} \times \cI^{N-k} \rightarrow \Real ^{C}$
is permutation invariant to the orders of the elements in $\bx$ and $\bi_{-}:=(i_{1}, \cdots,i_{j-1}, i_{j+1}, \cdots,  i_{N-k})$ respectively, i.e.,
  \begin{equation}
   \forall \sigma_{x} \in \bS_{k}, \text{ } \forall \sigma_{i_{-}} \in \bS_{N-k-1}, \quad [Q^{\star}(\bx,i_{j}, \bi_{-})]_{j} \equiv [Q^{\star}(\sigma_{x}(\bx), i_{j}, \sigma_{i_{-}}(\bi_{-}))]_{j}.
  \end{equation}
  In Lemma~\ref{thm:lem_conti_repr}, we show that $[Q^{\star}(\bx,i_{j}, \bi_{-})]_{j}$ can be decomposed in the form of $ H(\sum_{x \in \bx} \xi_{x}(x), i_{j}, \sum_{i \in \bi_{-}}\xi_{i}(i))$ where $H, \xi_{x}, \xi_{y}$ are proper continuous functions.
  Finally, we prove that I-shared Q-network $Q_{\theta}$ with more than four layers can approximate the decomposed forms of the functions: $H, \xi_{x},$and $\xi_{y}$.

\begin{lemma} \label{thm:lem_conti_repr}
  If a continuous function $F(\bx,i,\bi_{-}):\cX^{k} \times \cI \times \cI^{N-k-1} \rightarrow \Real^{C}$ is permutation invariant to the orders of the items in $\bx \in \cX^{k} $ and $\bi_{-} \in \cI^{N-k-1}$, i.e.,
  $$ \forall \sigma_{x} \in \bS_{k}, \text{ }\forall  \sigma_{i_{-}} \in \bS_{N-k-1}, \quad F(\sigma_{x}(\bx),i, \sigma_{i_{-}}(\bi_{-})) \equiv F(\bx,i,\bi_{-}).  $$
  if and only if $F(\bx,i,\bi_{-})$ can be represented by proper continous functions  $H, \xi_{x},$ and $ \xi_{i}$ with the form of
  \begin{equation} \label{eq:Hxyz}
    F(\bx,i,\bi_{-})= H\Big(\sum_{x \in \bx}\xi_{x}(x), i, \sum_{i \in \bi_{-}}\xi_{i}(i)\Big).
  \end{equation}
  \begin{proof}
  %  $ \\ $
  % ({\bf Sufficiency}) 
  The sufficiency is easily derived from the fact that $\sum_{x \in \bx}\xi_{x}(x)$, and $\sum_{i \in \bi_{-}}\xi_{i}(i)$ are permutation invariant to the orders of $\bx$ and $ \bi_{-}$ respectively.
  Therefore,  $H\Big(\sum_{x \in \bx}\xi_{x}(x), i, \sum_{i \in \bi_{-}}\xi_{i}(i)\Big)$ must be permutation invariant to the orders of $\bx$ and $\bi_{-}$.

% \smallskip \noindent({\bf Necessity}) 
  % Without loss of generality, we assume that $\cX = \cY = \cZ = [0,1]$.
  To prove the necessity, we exploit a result of Theorem 7 in \cite{zaheer2017deep} about the existences of following continuous functions with proper compact sets $\cX_{0} $ and $ \cI_{0}$ on Euclidean space. 
  % \begin{equation}
  % \begin{array}{llr}
  %    \exists \eta_{x} \in C([0,1]^{l+1}, [0,1]^{l}), & \exists \xi_{x} \in C([0,1],[0,1]^{l+1}), & \eta_{x}(\sum_{x \in \bx} \xi_{x}(x)) := \bx, \\  \exists \eta_{y} \in  C([0,1]^{m+1}, [0,1]^{m}), & \exists \xi_{y} \in C([0,1], [0,1]^{m+1}),  & \eta_{y}(\sum_{y \in \bi} \xi_{y}(y)) := \bi, \\   \exists \eta_{z} \in C([0,1]^{o+1}, [0,1]^{o}), & \exists \xi_{z} \in C([0,1], [0,1]^{o+1 }), & \eta_{z}(\sum_{z \in \bz} \xi_{z}(z)) := \bz.
  % \end{array}
  % \end{equation}
  \begin{equation}
    \begin{array}{llr}
       \exists \eta_{x}:\cX_{0}^{k+1} \rightarrow \cX^{k}, & \exists \xi_{x}:\cX \rightarrow \cX_{0}^{k+1},  & \qquad \eta_{x}(\sum_{x \in \bx} \xi_{x}(x)) := \bx, \\  \exists \eta_{i}: \cI_{0}^{N-k} \rightarrow \cI^{N-k-1}, & \exists \xi_{i}:\cI \rightarrow  \cI_{0}^{N-k},  & \qquad \eta_{i}(\sum_{i \in \bi} \xi_{i}(i)) := \bi.
    \end{array}
    \end{equation}
  Therefore, we can define a continuous function $H(\cdot,\cdot,\cdot): \cX_{0}^{k+1} \times \cI \times \cI_{0}^{N-k} \rightarrow \Real^{C}$ as
   $$H(\cdot,\cdot,\cdot):= F(\eta_{x}(\cdot), \, \cdot \, , \eta_{i}(\cdot)).$$
   It is obvious that the function $H$ satisfies \eqref{eq:Hxyz}.
  \end{proof}
\end{lemma}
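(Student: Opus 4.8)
The plan is to establish the biconditional by treating its two directions separately, since the forward (sufficiency) direction is elementary while the reverse (necessity) direction carries all the content. For sufficiency, I would observe that for any feature maps $\xi_{x}$ and $\xi_{i}$ the pooled quantities $\sum_{x \in \bx}\xi_{x}(x)$ and $\sum_{i \in \bi_{-}}\xi_{i}(i)$ are unchanged under any reordering of the summands; hence any $F$ of the form \eqref{eq:Hxyz} is automatically invariant under $\sigma_{x} \in \bS_{k}$ and $\sigma_{i} \in \bS_{N-k-1}$, with continuity of $F$ following from continuity of $H$, $\xi_{x}$, $\xi_{i}$ and of addition.

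For necessity, the key idea is that an invariant continuous $F$ depends on $\bx$ and $\bi_{-}$ only through their underlying multisets, so it factors through the orbit spaces of the two symmetric-group actions; I would realize each orbit space concretely as the image of a sum-pooling embedding that is injective on multisets and admits a continuous inverse. Concretely, first I would invoke the sum-decomposition embedding result (Theorem 7 of \cite{zaheer2017deep}): for a compact domain there exist a continuous feature map $\xi_{x}:\cX \to \cX_{0}^{k+1}$ and a continuous map $\eta_{x}$ on the image of the pooling such that $\eta_{x}\big(\sum_{x \in \bx}\xi_{x}(x)\big)$ returns the tuple $\bx$ up to a permutation, and symmetrically a continuous $\xi_{i}:\cI \to \cI_{0}^{N-k}$ with $\eta_{i}$ for the $(N-k-1)$-element argument $\bi_{-}$. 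The embedding dimensions $k+1$ and $N-k$ are chosen large enough to separate distinct multisets, which is exactly what the cited theorem guarantees on a compact domain.

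Then I would define $H(u,i,v) := F\big(\eta_{x}(u), i, \eta_{i}(v)\big)$ on the (compact) product of the two pooling images with $\cI$. This $H$ is continuous as a composition of continuous maps. Evaluating at $u = \sum_{x\in\bx}\xi_{x}(x)$ and $v = \sum_{i\in\bi_{-}}\xi_{i}(i)$ gives $H\big(\sum_{x}\xi_{x}(x), i, \sum_{i}\xi_{i}(i)\big) = F\big(\eta_{x}(u), i, \eta_{i}(v)\big)$; since $\eta_{x}$ and $\eta_{i}$ recover $\bx$ and $\bi_{-}$ only up to reordering, and $F$ is invariant to precisely those reorderings by hypothesis, the right-hand side equals $F(\bx, i, \bi_{-})$, establishing \eqref{eq:Hxyz}.

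The hard part will be the continuous-inverse step inside the cited embedding result, i.e.\ guaranteeing that $\eta_{x}$ and $\eta_{i}$ are continuous rather than merely well-defined set-theoretic inverses. This is where compactness of $\cI$ and $\cC$ (hence of $\cX = \cI \times \cC$ and of the fixed-size multiset spaces) is essential: the pooling map is a continuous injection from a compact multiset space into Euclidean space, so it is a homeomorphism onto its compact image and its inverse is automatically continuous. I would make sure the chosen feature coordinates (for instance a power-sum / symmetric-polynomial family of the stated dimension) genuinely separate distinct multisets over the multi-dimensional compact domain, since injectivity of the pooling is the single fact on which the whole factorization rests.
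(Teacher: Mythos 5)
Your proposal is correct and follows essentially the same route as the paper's proof: the same trivial sufficiency argument via permutation invariance of sum-pooling, and the same necessity argument invoking Theorem 7 of \cite{zaheer2017deep} to obtain $\xi_{x},\eta_{x},\xi_{i},\eta_{i}$ and then defining $H := F(\eta_{x}(\cdot),\,\cdot\,,\eta_{i}(\cdot))$. If anything, you are more careful than the paper on two points it glosses over — that $\eta_{x},\eta_{i}$ recover the tuples only up to permutation (which is exactly absorbed by the invariance of $F$), and that continuity of these inverses rests on the pooling map being a continuous injection from a compact space, hence a homeomorphism onto its image.
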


\begin{proof}
With the result of the lemma, the only remained problem to be checked is that I-shared Q-network $Q_{\theta}(\bx,i_{j}, \bi_{-})$ with 4 layers is able to approximate $H(\sum_{x \in \bx} \xi_{x}(x), i_{j}, \sum_{y \in \bi_{-}}\xi_{y}(y))$ if the size of the nodes increases.
% Before a detailed explanation, we set some notations of the nodes in $Q_{\theta}$.
During this proof, we use the universal approximation theorem by \cite{gybenko1989approximation} which shows that any continuous function $f$
on a compact domain can be approximated by a proper 2-layered neural network.  
To approximate functions of the decomposition, we can increase the number of the channels described in Section~\ref{sec:multiplechannels}.
We omit the biased term $\bb$ for simplicity. 
Figure~\ref{fig:universe} describes the architecture of $Q_{\theta}$.
For $\xi_{x}$, there exist weight parameter vectors $M$ and $M'$ in $\theta$ such that $ \xi_{x} \approx  M \circ M'$. 
We set $W_x^{1}:=M'$ and $W_{x,i}^{2}:=M$ (Apricot edges).  
Similarly, we can also find weight parameter vectors $W_{i}^{1}:=\begin{bmatrix}
    \bm{I} & \bm{0} \\
    \bm{0} & R' \\
  \end{bmatrix} $ 
  and $ W_{i,i}^{2}:=R$ where $\xi_{i} \approx R \circ R'$ (Grey edges). 
  The identity in $W_{i}^{1}$ and $W_{i}^{2}=\bm{I}$ (Blue edges) represent the passing $i_{j}$ as the input of $H$.  
We set $W_{i}^{3} $ and $ W_{i}^{4}$ to satisfy $H \approx W_{i}^{4} \circ W_{i}^{3}$ (Green edges). 
Other weight parameters such as $W^{3}_{i,i}$ just have zero values.  
With this weight parameter vector for $Q_{\theta}$, $Q_{\theta}(\bx,i_{j}, \bi_{-j})$ successfully approximates the function $H\Big(\sum_{x \in \bx}\xi_{x}(x), i, \sum_{i \in \bi_{-}}\xi_{i}(i)\Big)=[Q^{\star}(\bx, i, \bi_{-})]_{j}$ which is $j$th row values of $Q^{\star}$.
Furthermore, the EI property also implies that for all $j$, $[Q_{\theta}(\bx,i, \bi_{-})]_{j}$ are the same function, in fact. 
Therefore, the I-shared Q-network $Q_{\theta}$ with this architecture can approximate all the rows of $Q^{\star}$ simultaneously.
\end{proof}

\section[cross]{Detailed Experiment Settings}  \label{sec:experiment}
In this subsection, we explain the environment settings in more detail.

\subsection[cross]{Evaluation Settings} \label{sec:Evaluation Settings}
\paragraph{Circle Selection (CS)}
\begin{figure}
  \centering
  \subfloat[$r<0$]{\label{fig:negative}\includegraphics[width=0.20\linewidth]{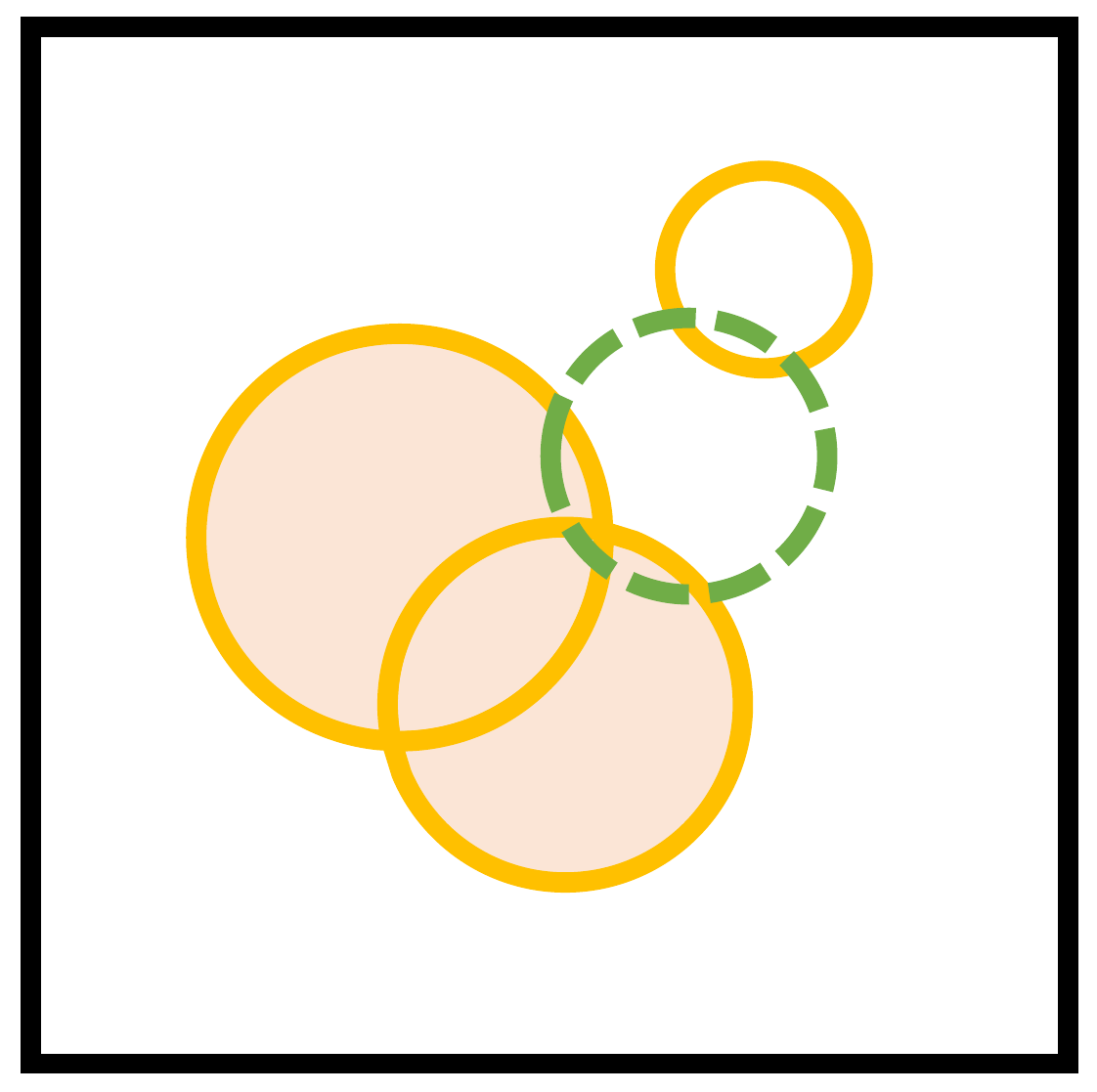}}
  \subfloat[$r=0$]{\label{fig:zero}\includegraphics[width=0.20\linewidth]{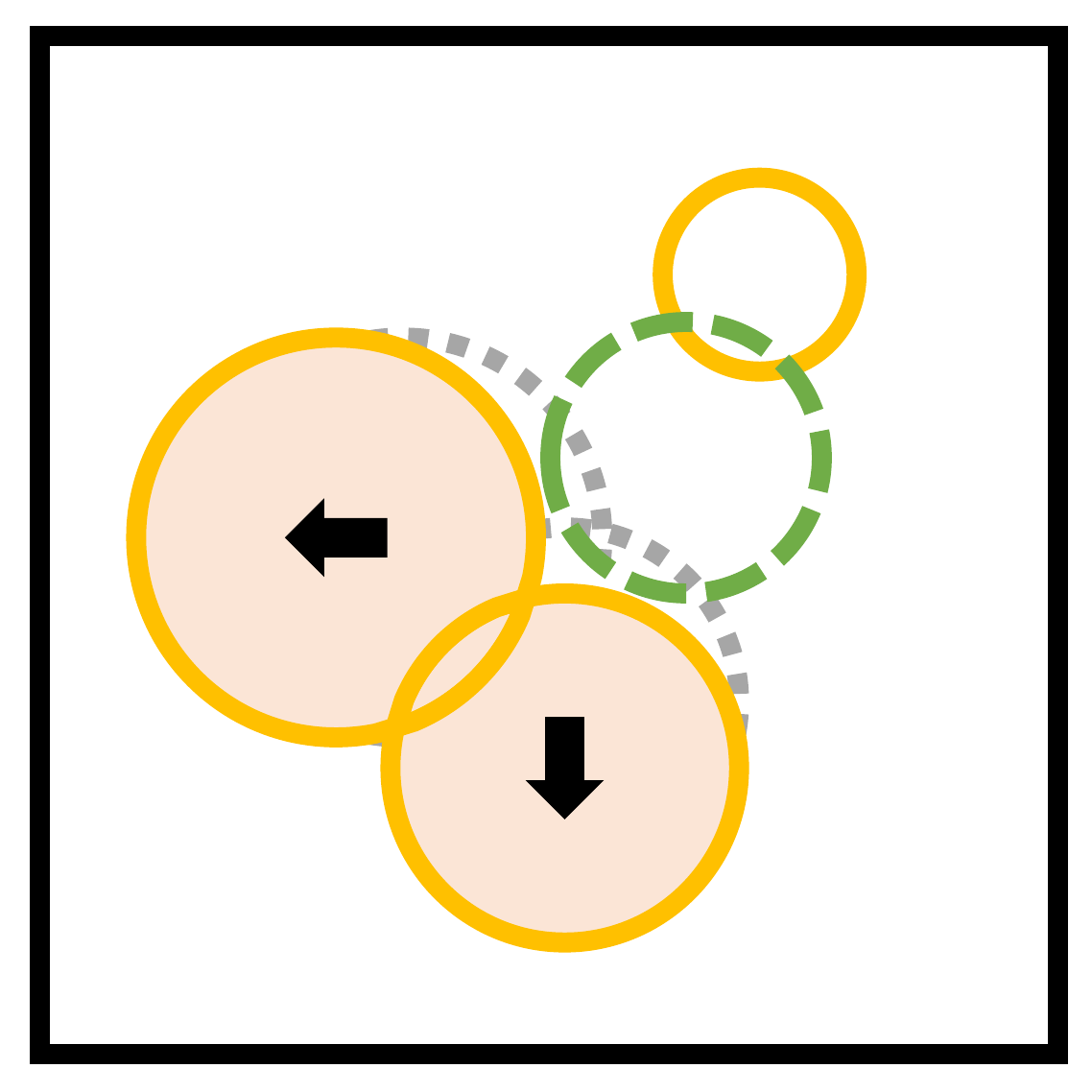}}
  \subfloat[$r>0$]{\label{fig:positive}\includegraphics[width=0.20\linewidth]{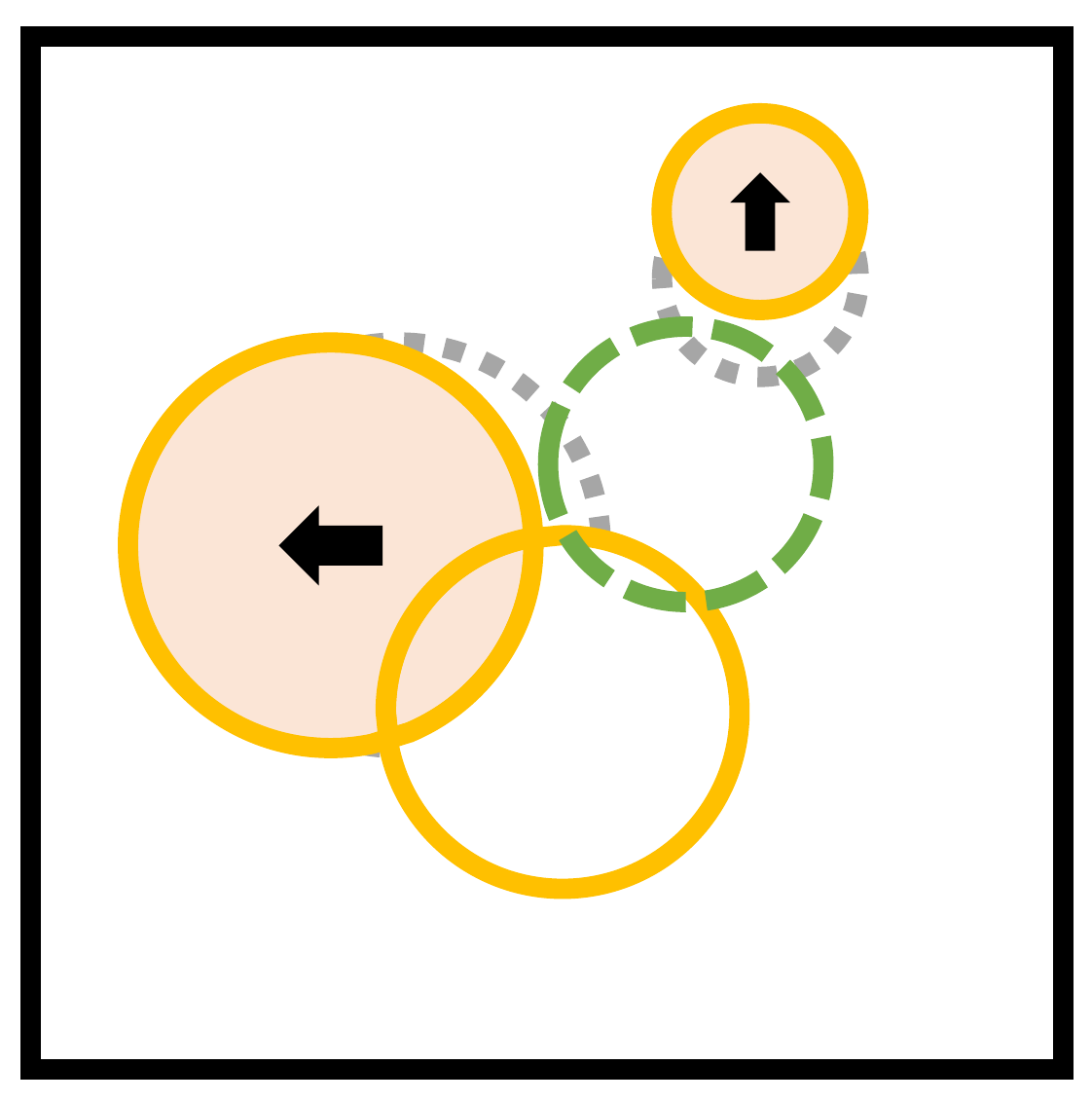}}
  \vspace{-0.25cm}
  \caption[cross]{Example scenarios of the CS task with $N=3$ selectable (orange colored) and $U=1$ unselectable (green dashed) circles, with $K=2$ selected (shaded) circles. The assigned commands are represented by the arrows. The agent receives (a) \textit{negative} reward if selected circles overlap with unselectable one; (b) \textit{zero} reward if only selected circles are overlapped with each other; and (c) \textit{positive} reward if there is no overlap.
  %Both selected circles overlap the unselectable circle. The negative value of the area summation of the selected circles becomes the reward. (b) Selected circles do not overlap the unselectable circle but overlap each other. The reward $r$ becomes zero. (c) There is no overlap, so the reward is positive.
  } 
%   \vspace{-0.5cm}
  \label{fig:circles}
\end{figure}
As mentioned in Section~\ref{sec:exp-CS}, the game consists of $N$ selectable and $U$ unselectable circles within a $1 \times 1$ square area, as shown in Fig.~\ref{fig:circles}. Here, circles are the items and $i_{n} :=(pos_x, pos_y, rad)$ are their contexts, where $pos_x$ and $pos_y$ are their center coordinates. 
% There is a single command which is to select the circle. 
Initially, all circles have random coordinates and radius, sampled from $(pos_x, pos_y) \in  [-0.5, 0.5] \times [-0.5, 0.5]$ and $rad \in [0, 0.45]$ respectively, 
% where $U$ is random uniform distribution. 
After the agent selects $K$ circles with the allocated commands, transition by S-MDP occurs as follows. 
The selected circles disappear. The unselectable circles that collide with the selected circles disappear. New circles replace the disappeared circles, each of initial radius $0.01$ with uniformly random position in $ [-0.5, 0.5] \times [-0.5, 0.5]$. 
Remaining circles expand randomly by $[0.045, 0.055]$ in radius (until maximum radius $0.45$) and move with a noise sampled uniformly from $[-0.01, 0.01] \times [-0.01, 0.01]$. 
The agent also receives reward $r$ after the $K$th selection, calculated for each selected circle $k$ of area $A_{k}$ as follows: \textit{Case 1.} The selected circle collides with one or more unselectable circle: $r = -A_{k}$. \textit{Case 2.} Not case 1, but the selected circle collides with another selected circle: $r = 0$. \textit{Case 3.} Neither case 1 nor 2: $r = A_{k}$. 
We test our algorithm when $K=6$ with varying $N=50, 200$.
This fact is described in Figure~\ref{fig:circles}.
\subsection{Predator-Prey (PP)}
In PP, $N$ predators and $U$ preys are moving in $G\times G$ grid world. 
After the agent selects $K$ predators as well as the  commands, the transition in S-MDP occurs. 
In our experiments, we tested the baselines when $N=10$, $U=4$ with varying $K=4,7,10$ while $G=10$.
A reward is a number of the preys that are caught by more than two predators simultaneously. 
For each prey, there are at most 8 neighborhood grids where the predator can catch the prey.

\subsection{Intra-sharing with unselectable items}
In real applications, external context information can be beneficial for the selection in S-MDP. 
For instance, in the football league example, the enemy team's information can be useful to decide a lineup for the next match.
ISQ can handle this contextual information easily with a simple modification of the neural network. 
Similar to invariant part for previously selected items (red parts in Fig.~\ref{fig:Perphase_networks}) of I-shared Q-network, we can add another invariant part in the Q-networks for the external context: the information of the unselectable circles (CS) and prey (PP).

\subsection[cross]{Hyperparameters}
\label{sec:hyperparameters}
During our experiment, we first tuned our hyperparameters for CS and applied all hyperparmeters to other experiments. 
The below table shows our hyperparameters and our comments for Q-neural networks. 

\begin{table}[ht]
  \small
  \centering
  \caption[cross]{Training hyperparameters}
  \label{table:hyperparam}
  \begin{tabular}{L{4.5cm}L{1.2cm}L{7cm}}
    \bf{Hyperparameter} & \bf{Value} & \bf{Descriptions} \\ \midrule
    Replay buffer size & $50,000$ & Larger is stable  \\
    Minibatch size & $64$ & Larger performs better \\
    Learning rate (Adam) & $0.001$ & Larger is faster and unstable \\
    Discount factor & $0.99$ & Discount factor $\gamma$ used in Q-learning update \\
    Target network update frequency & $1000$ & The    larger frequency (measured in number of training steps)
  becomes slower and stable \\
    Initial exploration & $1$ & Initial value of $\epsilon$ used in $\epsilon$-greedy exploration \\
    Final exploration & $0.1$ & Final value of $\epsilon$ used in $\epsilon$-greedy exploration \\
    Number of layers & $3$ & The number  of the layers in the Q-network \\
    Number of nodes  & $48$ & The number of channels per each item in a layer \\
    Random seed & $4$ & The number of random seeds for the independent training \\
  \end{tabular}
\end{table}

\subsection{Computation cost}
We test all baselines on our servers with 
Intel(R) Xeon(R) CPU E5-2630 v4 @ 2.20GHz (Cpu).
Our algorithm (ISQ-I) able to run $1.16 10^6$ steps during one day in CS ($N=200$, $K=6$).
Usually, ISQ-I is robust to large $N$ from I-sharing. However, the computation time linearly increases as $K$ grows since the number of the networks should be trained increases large.
This problem will be fixed if we exploit the parallelization with GPUs.

\end{appendices}

\end{document}